\documentclass[sigconf, authorversion,nonacm]{acmart}

\AtBeginDocument{%
  }



\usepackage{algorithmic}
\usepackage{graphicx}
\usepackage{textcomp}
\usepackage{xcolor}
\usepackage{booktabs}
\usepackage{hyperref}
\usepackage{subfig}

\usepackage{enumitem}
\usepackage[normalem]{ulem}
\usepackage{tikz}
\usetikzlibrary{external}
\tikzexternalize[prefix=tikzrender/]
\usepackage{pgfplots}
\usepackage{pgfplotstable}
\usepgfplotslibrary{colormaps}
\usepackage{etoolbox}

\usepackage{cleveref}
\crefname{lstlisting}{listing}{listings}
\Crefname{lstlisting}{Listing}{Listings}
\usepackage{listings}


\newtheorem{lemma}{Lemma}
\newtheorem{theorem}{Theorem}
\newtheorem{corollary}{Corollary}

\usepackage{algorithmic}
\usepackage{algorithm}

\newcommand{\rev}[2]{}




\pagestyle{plain}

\begin{document}

\title{WCDT: Systematic \underline{WC}ET Optimization for\\ \underline{D}ecision \underline{T}ree Implementations}

\author{Nils Hölscher}
\email{nils.hoelscher@udo.edu}
\affiliation{%
  \institution{TU Dortmund University}
  \city{Dortmund}
  \country{Germany}
}

\author{Christian Hakert}
\email{christian.hakert@udo.edu}
\affiliation{%
  \institution{TU Dortmund University}
  \city{Dortmund}
  \country{Germany}
}

\author{Georg von der Brüggen}
\email{georg.von-der-brueggen@udo.edu}
\affiliation{%
  \institution{TU Dortmund University}
  \city{Dortmund}
  \country{Germany}
}

\author{Jian-Jia Chen}
\email{jian-jia.chen@udo.edu}
\affiliation{%
  \institution{TU Dortmund University}
  \city{Dortmund}
  \country{Germany}
}

\author{Kuan-Hsun Chen}
\email{k.h.chen@utwente.nl}
\affiliation{%
  \institution{University of Twente}
  \city{Twente}
  \country{Netherlands}
}

\author{Jan Reineke}
\email{reineke@cs.uni-saarland.de}
\affiliation{%
  \institution{Saarland University}
  \city{Saarbrücken}
  \country{Germany}
}
\begin{abstract}
Machine-learning models are increasingly deployed on resource-constrained embedded systems with strict timing constraints.
In such scenarios, the worst-case execution time (WCET) of the models
is required to ensure safe operation. Specifically, decision trees are a prominent class of machine-learning models and the main building blocks of tree-based ensemble models (e.g., random forests), which are commonly employed in resource-constrained embedded systems.

In this paper, we develop a systematic approach for WCET optimization of decision tree implementations.
To this end, we introduce a linear surrogate model that estimates the execution time of individual paths through a decision tree based on the path's length and the number of taken branches.
We provide an optimization algorithm that constructively builds
a WCET-optimal implementation of a given decision tree with respect to this surrogate model.
We experimentally evaluate both the surrogate model and the
WCET-optimization algorithm. The evaluation shows that the optimization algorithm improves analytically determined WCET by up to $17\%$ compared to an unoptimized implementation.
\end{abstract}

\maketitle

\pagestyle{plain}
\thispagestyle{plain}

\vspace*{-.2cm}

\section{Introduction}

Machine-learning (ML) models are not only growing larger, deeper, and more complex, but they are also increasingly employed in resource-constrained embedded systems.
 Tree-based models, such as random forests, excel in various domains due to their effectiveness with limited training data~\cite{10.5555/3172077.3172386} and superior interpretability~\cite{DBLP:journals/natmi/LundbergECDPNKH20}.
They have, for example, been utilized
in astrophysics~\cite{DBLP:conf/pkdd/BockermannBBEMR15}, real-time pedestrian detection~\cite{6751433}, real-time 3D face analysis~\cite{DBLP:journals/ijcv/FanelliDGFG13}, and remote sensing~\cite{sensing}.

While the average resource consumption of machine learning models is a well-explored area, there is still a notable gap in research focusing on enhancing worst-case execution time (WCET) guarantees.
In this paper, we address this gap by examining the WCET optimization
for the internal structure
of decision-tree implementations.
Optimizing the WCET of decision-tree implementations is highly relevant to enable resource-efficient systems with ML components.
Performance optimization on decision-tree implementations has been studied recently~\cite{8594826, chen2022efficient, 9925686,hakert2023immediate, 9923840}.
However, all these methods target the average-case execution time (ACET), and to the best of our knowledge no
results on WCET optimization of decision trees have been provided.

Applying a trained decision tree to new inputs, so-called model inference, amounts to following a path from the root of the tree to a leaf node, where the model output
is determined.
The ACET of model inference
depends on the input distribution, as this distribution determines
the probability for each possible paths through the tree, and optimizes the decision trees according to these distributions.
A decision tree is usually implemented in one
of two ways:\looseness=-1
\begin{itemize}[leftmargin=*]
\item A \emph{native-tree} implementation is a loop that iterates over the nodes of the decision tree, which are stored as a data structure in memory, starting from the root to a leaf node that contains the output of the decision, e.g., a classification result.
\item In an \emph{if-else-tree} implementation each node is directly encoded as an \texttt{if-else-statement}, where the subtrees are encoded in the \texttt{then} and \texttt{else} branches of the \texttt{if-else-statement}.
\end{itemize}

Existing average-case optimizations for decision trees, e.g.,~\cite{chen2022efficient, hakert2023immediate}, utilize the probability distribution of node accesses during the training phase as a proxy for the distribution at inference time.
These methods modify the memory layout of the decision trees to increase cache friendliness.
In particular, the tree nodes
are arranged to improve spatial locality, so that cache prefetching often loads the next accessed node, and eviction
targets nodes that are likely not used in the future.

For WCET optimization of decision trees, probability distributions of path accesses
are not relevant. Instead, the execution time of the path with the highest WCET, i.e., the worst-case execution path (WCEP), needs to be minimized.
Any change in the realization of the tree potentially not only affects the execution time of the WCEP, but also has the potential to make a different path the WCEP. As a consequence, to optimize a decision tree's WCET, its WCEP needs to be identified and optimized repeatedly.
Hence, conducting optimization of decision trees for their WCET crucially requires a model for the WCEP construction
and fast estimation of its WCET to omit time-consuming black-box WCET analysis during the optimization process.

\noindent\textbf{Our Contributions:}
In this work, we develop a WCET-optimization method for decision
trees that is centered around a fast and accurate surrogate model of path execution times:
\begin{itemize}[leftmargin=*]
\item We define a linear surrogate model in
  Section~\ref{sec:surrogate} that estimates the WCET of paths in a decision tree.
  We construct an accurate surrogate model based on extensive experiments with WCET analysis, which
  results in a one time overhead.
  We evaluate the derived model quality in two regards: ,
  We examine the expressiveness of the surrogate model
  and analyze to which degree it can model the
  collected data from the extensive WCET analysis.

  \item In Section~\ref{sec:optimization} we develop a greedy optimization algorithm that constructs an \emph{if-else-tree} realization of a given decision tree that provably minimizes the WCET w.r.t. the surrogate model.

\item Section~\ref{sec:eval} presents the evaluation results for a wide range of concrete decision tree models,
  trained on multiple datasets. We derive realizations of these trees
  with existing state-of-the-art ACET-optimization methods, as well as
  with our optimal model-centered WCET-optimizing algorithm. We
  compare the resulting estimated WCET for all realizations from the
  surrogate model as well as from precise WCET analysis, and
  show that the surrogate model serves as an accurate estimation of
  the analyzed WCET. As a consequence, the derivation of an optimal
  realization under the surrogate model delivers a highly WCET-optimized
  decision tree, which reduces the analytically determined WCET by up to $17\%$ compared to a straightforward realization.
\end{itemize}


\section{Decision Tree Implementations}

There is a wide range of techniques to train decision trees.
During training, for instance, several approaches for random sampling, bootstrapping, hyperparameter variations, and different split criteria can be considered.
Decision-tree training
is not the focus of this work, since the proposed optimization of WCET inference
is agnostic to how the logical decision tree structure is derived.
We instead focus on the efficient realization of the inference operation for given decision trees.
Consequently, in this section, we lay out the basic structure of binary decision trees and their relation to an implementation in C/C++, which is independent to the library used to create the trees\@.

A decision tree \(DT\) consists of \(\{n_0, n_1, \ldots, n_m\}\) a set of nodes, where \(n_0\) is the root node.
 One way to measure a decision tree's size is via its depth, the length of the longest path from the root to a leaf node.
We define the length of a path as the number of edges it contains, i.e., a decision tree consisting only of a root node has a depth of~0.

Every non-leaf node, also called inner node, is associated with a feature index \(FI(n_i)\), a split threshold \(SP(n_i)\), a left child \(LC(n_i)\), and a right child \(RC(n_i)\), whereas every leaf node is only associated with a prediction value \(P(n_j)\).
The inference operation of a decision tree \(\inf(DT, x)\) returns the prediction value of the corresponding leaf node \(P(n_t)\), which is determined by following a path of nodes through the tree \( n_{x_0}, n_{x_1}, \ldots, n_{x_y} \), where \(x_0\) is the root and \(x_y\) is a leaf node, according to the following inference rule:
\begin{equation}
    n_{x_{i+1}}=\left \{ \begin{array}{cc}
        LC(n_{x_i})&: x[FI(n_{x_i})] \leq SP(n_{x_i})\\
        RC(n_{x_i})&: x[FI(n_{x_i})] > SP(n_{x_i})
    \end{array}\right.
    \label{eq_childfollow}
\end{equation}

For the practical realization of the inference operation, two popular methods exist~\cite{Asadi/etal/2014}. One is the realization as so-called \emph{native trees},  where \Cref{eq_childfollow} is repeatedly evaluated in a loop and \(x_i\) is stored as a running index.
In this realization, tree nodes are stored as data objects in an array and \(x_i\) can be directly encoded as an index into this array.
The loop terminates whenever a leaf node is reached and the inference procedure returns the associated prediction value.
The potential for optimization lies in the layout of the data array to minimize the number of possible data cache misses on all paths through the tree.

The other common implementation approach is via so-called \emph{if-else-trees}, where the tree is unrolled into nested \texttt{if-else-statements}. The condition of \Cref{eq_childfollow} is checked for every node in a separate \texttt{if-statement}, the entire implementation of the left subtree is placed in the if-block, the implementation of the right subtree is placed in the else-block. Leaf nodes are translated into \texttt{return-statements} in this implementation, such that the execution is ended and the corresponding prediction value is returned.

For the rest of this paper, we focus on the implementation of if-else-trees,
since they have been shown to be faster than native trees in most cases by Buschj\"ager et~al.~\cite{8594826}.
They are also common in industrial practice; for instance, in the Treelite project (\url{https://treelite.readthedocs.io/en/latest/}), which is utilized by Amazon SageMaker Neo for inference at edge devices~\footnote{The original work was presented by Cho and Li~\cite{Cho2018}.}.
 We refer to the code directly behind a conditional branch instruction as the \emph{untaken slot} and to the code at the branch target location as the \emph{taken slot}. The compiler always translates an \texttt{if-statement} to a corresponding pair of comparison and conditional branch instruction, which effectively places either the if-block or the else-block in the taken or untaken slots of the branch instruction.
 In this work, we assume that compiler optimizations are disabled, since static WCET analysis tends to get more complex and possibly pessimistic on heavily compiler-optimized results~\cite{WCETO3}. Therefore, the same translation from source code to assembly code is always applied.

\section{Decision Tree Tuning and WCET Model}
\label{sec:tuning}

\noindent\textbf{Decision Tree Tuning.}
For \emph{if-else trees},  
implementation optimization can only
utilize a limited set of actions to transform the code structure into
a more hardware friendly, and therefore optimized, version, while
keeping the model logic and output untouched. In this paper,
we focus
only on \emph{flipping of the comparison in a decision tree
  node}. That is,
  of the two children nodes, one is modeled as a
{\emph{taken slot}} (i.e., the child of the \emph{else}
statement) and
the other is modeled as an {\emph{untaken
    slot}} (i.e., the child of the \emph{if} statement).  By
exchanging the entire untaken and taken slots (i.e. flipping the
subtree locations under the corresponding node) and 
inverting the branch condition, the logical correctness of the tree node
is maintained.

The untaken slots are placed directly adjacent to the branch condition and contiguously in memory.
When execution continues in the untaken slot, the following instructions are thus likely in the same cache block as the branch condition, which results in a reduced number of cache misses, and thus a lower WCET.

\noindent\textbf{WCET Analysis Tool and Microarchitecture Model.}
In this paper, we denote by \emph{analyzed WCET} the bound on the WCET computed by a WCET analysis tool, based on assumptions about the underlying hardware. The tool and considered assumptions are detailed in the following. We further denote by \emph{estimated WCET} the WCET predicted by a surrogate model. Neither the analyzed nor the estimated WCET perfectly corresponds to the actual WCET of a program. However, the analyzed WCET is the state-of-the-art approach to reason about the WCET of a program.

We
determine the WCET by static analysis using LLVMTA~\cite{llvmta},
which is a WCET analyzer that
utilizes information obtained from LLVM during the analysis
process. Specifically, for maximum timing predictability and efficiency, our instantiation of LLVMTA assumes a single-core strictly-in-order
(SIC) processor~\cite{Hahn2015,SIC}, which enforces that all hardware
resources are used by instructions in the program order.
This allows a
significantly more efficient pipeline analysis, as timing anomalies
are provably avoided. The SIC processor has the following five-stage pipeline:
instruction fetch, instruction
decode, execute, memory access, and write back.

 The memory hierarchy
consists of separate single-level instruction and data caches, and a
unified background memory.
Both caches share a bus to access the background memory.  Memory
accesses are the result of cache misses
in the instruction fetch
and in the memory access stage.  To enforce strictly in-order execution,
data-cache misses are prioritized over instruction-cache misses.
The caches employ Least-Recently-Used (LRU) replacement, which ensures predictable
behavior~\cite{Reineke07}, even if the initial cache states are
unknown. All
memory levels have a known constant latency.

Our method
focuses on the behavior of conditional
branching and the corresponding effect on the analyzed WCET.
As the processor continues fetching and decoding instructions following branch instructions, conditional branch instructions have a single-cycle latency if the branch is not taken.
On the other hand, if a branch is taken, the pipeline is flushed and the
branch target is fetched, resulting in a longer delay.  In addition to
the immediate cost of flushing the pipeline upon a taken branch, the
branch target is often not in the same cache line as the branch
instruction, which may cause an additional cache miss compared to the
untaken branch.
Thus optimizing conditional branches also implicitly optimizes the implementation's cache performance.


\section{WCET Surrogate Model}
\label{sec:surrogate}
\rev{1}{Sections 5.1 and 5.2 look contradictory to me regarding the impact of tree depth. It is indicated in section 5.2 that the best results are obtained for deep tree. This does not seem to be true when using synthetic trees in Section 5.1. More explanation is needed here.}

\rev{2}{Equation (3) presents the cost function that needs to be optimized, consisting of a generic overhead cost, a cost for executing the test at each level, and the number of taken branches. It is clear that the overhead and computation per level does not change with the model. So, the only metric that is optimized is the number of taken branches. This does not seem realistic. What about caches? Cache misses are usually orders of magnitude more expensive than conditional branches -- this is also the case for the considered architecture model (100 cycles vs. ~5 cycles?). There seem to be two possible explanations: 1.) the code size at each level is roughly a multiple of the size of a cache line or 2.) the code placement is equally bad across all measurements. In the former case all executions always experience cache misses all the time and there is no possibility for reuse. In the latter case the cache misses become a form a background noise. In both cases a constant per depth cost seems plausible. The authors need to explain why caches are not relevant for the metric ... The fact that the high-level optimization is not able to control the layout is then another issue.}

We provide a pragmatic approach for WCET optimization of decision trees.
As discussed earlier, paths following the untaken slot of a conditional branch are expected to have a lower WCET as they incur fewer instruction cache misses and do not suffer from a pipeline flush.
As a basis for optimization, we thus introduce a surrogate model that estimates a path's execution time based on its length and its number of taken branches.
Although highly complex surrogate models could deliver more accurate predictions, we utilize a linear surrogate model as
the simplicity of this surrogate model allows for a greedy algorithm that determines the optimal realization of a decision tree with respect to the surrogate model, which we describe in Section~\ref{sec:optimization}.
We discuss the surrogate model in more detail in Section~\ref{sec:surrogatemodel} and analyze its accuracy in Section~\ref{sec:surrogatecompare}.

\subsection{A Linear Surrogate Model}
\label{sec:surrogatemodel}

The surrogate model includes two path properties 
that strongly influence a path's WCET: (i)~the total length of the path, since the execution of each node on a path 
contributes to the WCET, and (ii)~the number of taken slots on a path, since a taken branch slot potentially increases the WCET by an instruction cache miss. This leads to the following
WCET estimation for the
parametrized surrogate model
\begin{equation}
	WCET_{surrogate}(d,t)=\sigma+\delta\cdot d + \gamma\cdot t
	\label{surrogatemodel}
\end{equation}
where $d$ is the length or depth of the path and $t$ is the number of taken branches.
Given the parameters $\sigma,~\delta$, and $\gamma$, the model estimates the WCET for a path with the given two properties.

To derive good parameter values for $\sigma,~\delta$, and $\gamma$, we conduct a systematic evaluation of the analyzed WCETs of if-else trees.
Since the determination of the analyzed WCET is not directly dependent on the values encoded in the tree nodes, but rather on the shape of a tree, we consider synthetic trees, which
are not the result of training on any specific data set.

We created a tool to synthesize decision trees up to a given maximal depth and provide statistical information for all paths through the tree.
The tool recursively generates random trees, starting from the tree’s root node, using the following strategy: For each tree generation, as long as the depth of the current node is less than or equal to half the maximal depth of the tree, we recursively generate left and right child nodes. If the depth of the current node is greater than half the maximal depth of the tree, but less than the maximal depth, the tool generates child nodes with a probability of 50\%; otherwise, also with probability 50\%, the node becomes a leaf node.
This procedure intends to mimic the behavior of the decision tree training according to the CART algorithm \cite{breiman1984classification}. This algorithm recursively splits data sets until a stop criterion is reached. One stopping criterion is achieving the maximal configured depth, while high purity in the remaining data can be another criterion. It has to be noted, that these additional stopping criteria only apply when an adequate tree configuration is chosen for the given data set. For instance, if a large data set is used to train a very small decision tree, it likely happens that all path have the maximal depth. Consequently, we generate only trees, which have a shape as trees resulting from an adequate training procedure.
Further, the tool generates an implementation as an if-else-tree in C++ and provides the following  information for every path:
\begin{itemize}[leftmargin=*]
	\item the length of the path, and
	\item the number of taken/untaken branches on the path.
\end{itemize}

For every leaf and the corresponding path, we synthesize input data that forces the inference exactly through this specific path.
Then, we pass the generated tree implementation to LLVMTA and derive the analyzed WCET for each specific path.
The WCET analysis for each single path then forms a data set which is the input for the derivation of the
surrogate model.

\rev{1}{Regarding the number of datasets used train the linear model, I was surprized by the small number of datasets (9) which seems to me very low to train models (even simple models such as linear regression). Perhaps the number of paths is large enough? This point should be commented and the number total number of paths given.}

To make the surrogate model applicable to more scenarios, we generate a set of surrogate models for trees with different maximal depths. Trees with a lower maximal depth likely have paths with similar lengths, compared to trees with higher maximal depths. To accommodate for this properly, we choose a surrogate model for the specific maximal depth of the considered decision tree, which can be directly calculated after the training of the tree.
We generate surrogate models for a maximal depth from $2$ to $18$ in steps of $2$.
When applying the surrogate model to a specific tree, we take the actual maximal depth of this tree and select the surrogate model for the closest rounded down depth.

The maximum depth is limited to $18$ in our surrogate model due to the high computational cost for analyzing the WCET of each path for larger trees.
Specifically, the analysis of the tree with maximal depth $18$ takes up to $35$ minutes for each path on a server with two AMD EPYC 7742 64-core processors with $8$ memory channels each.
The reason for this massive time requirement is that the WCET analyzer is not designed to focus on an individual path through the program.
In particular, the microarchitectural analysis, which explores all reachable parts of the program, is repeated for the analysis of each path.
Path analysis is then focused on the path under the analysis, but the analysis execution time is dominated by the preceding microarchitectural analysis step.
This tree has $4880$ paths, resulting in a total execution time of nearly $8$ days.

After computing the analyzed WCET for the generated synthetic trees and all of their paths,
we perform linear least square fitting of the surrogate model (\Cref{surrogatemodel}) on the synthetic data sets. The resulting parameters are listed in \Cref{tab:surrogatemodels}.
For the case of depth 2, no value for $\delta$ can be found with least square fitting, because there is no variation in the path length. This means, the surrogate model for depth 2 does not include the path length as an argument, which can be represented by setting $\delta$ to $0$.
It should be noted that $\sigma$ is a constant offset in the surrogate model. Although it is important to determine a suitable surrogate model,
it is not specific to properties of a path and therefore does not play a role for optimization.

To investigate the quality of the derived surrogate model, we compute the correlation between the analyzed per path WCET and the estimated WCET by the surrogate model. While an absolute correlation coefficient allows reasoning about absolute surrogate model values, a ranking correlation coefficient allows reasoning about the relative order of the surrogate model outputs.  Hence, we report both an absolute correlation coefficient ($R^2$) and a ranking based correlation coefficient (Kendall's Rank Correlation Coefficient~\cite{665905b2-6123-3642-832e-05dbc1f48979}, often referred to as Kendall's tau) in \Cref{tab:surrogatemodels}.
By construction, Kendall's tau is always between \(-1\) and \(1\), where \(1\) indicates a perfect correlation (i.e., the two rankings are identical), \(-1\) a fully negative correlation (i.e., the two rankings are opposites), and \(0\) no correlation at all.

The surrogate model shows positive correlations between the estimation using the surrogate model and the analyzed WCET derived from LLVMTA when the depth is greater than $2$.
The $R^2$ values are high for all generated target depths.
The lowest value is derived for the depth of $2$, where
the synthetic trees almost all have the same shape. Hence, the dataset
for surrogate-model creation
has low variation in the data points, which reduces the quality of the fitted model.
This is further reflected by Kendall's tau being $0$ for depth of $2$.
Generally, the considered surrogate model can explain the analyzed WCET of the analyzed paths well for all depth larger than $2$. Since decision trees of a depth of $2$ are potentially not interesting and challenging in terms of WCET optimization, this case was not further investigated.

\begin{table}[t]
	\centering
	\caption{Surrogate models derived from the decision trees
          trained on synthesized data sets.
			}
	\begin{tabular}{r r r r r r}
		\toprule
		Depth & $\sigma$ & $\delta$ & $\gamma$  & \( R^2 \) & Kendall's tau \\
		\midrule
		$2$ &   $269.75$    & N/A       & $5.00$    & 0.80 &$0.00$\\
		$4$ &  $226.06$  & $28.84$  & $3.54$        &  $0.98$&$0.86$\\
		$6$ &  $239.40$ & $25.17$  & $5.81$        &  $0.92$&$0.82$\\
		$8$ & $251.84$  & $25.62$  & $8.78$        &  $0.90$&$0.82$\\
		$10$ & $235.53$ & $27.38$  & $8.76$        &  $0.94$&$0.85$\\
		$12$ & $245.21$ & $26.45$  & $11.06$        &  $0.94$&$0.85$\\
		$14$ & $240.58$ & $26.19$  & $11.04$        &  $0.94$&$0.85$\\
		$16$ & $241.08$ & $27.60$  & $9.56$         &  $0.94$&$0.86$\\
		$18$ & $232.68$ & $27.04$  & $10.99$        &  $0.95$&$0.86$\\
		\bottomrule
	\end{tabular}
	\label{tab:surrogatemodels}
	\vspace*{-.5cm}
      \end{table}

\subsection{Accuracy of the Surrogate Model}
\label{sec:surrogatecompare}

As we highlight previously that the chosen surrogate model can serve as a suitable explanation for the analyzed WCET results on the synthetic trees, we investigate the accuracy of the derived surrogate model on real-world datasets.

In our experiment,
we consider decision trees trained on real-world data sets from the
UCI machine learning repository~\cite{asuncion2007uci}, namely adult, bank, covertype, letter, magic, mnist, satlog, sensorless-drive, spambase, and wine-quality.
These data sets are all reasonably large, therefore an adequate training process cannot target very small trees. Consequently, we study a maximal depth of $10$ and $20$ in the following.
We implement these decision trees as they are generated by the training process in a straightforward way without any optimization. In greater detail, we always place the left subtree to the untaken slot and the right subtree to the taken slot.
The decision trees are generated with the arch-forest  framework~\cite{8594826}.
We analyze the WCET of all paths from the root to the leaf through these trees using LLVMTA in the same way that we analyze the paths of the synthetic trees in Section~\ref{sec:surrogatemodel}.

\begin{figure*}[t]
	\centering
	\tikzexternaldisable
	\begin{tikzpicture}
		\begin{axis}[
			xbar,
			symbolic y coords={wine-quality,spambase,sensorless-drive,satlog,mnist,magic,letter,covertype,bank,adult},
			ytick=data,
			yticklabel style={rotate=45,anchor=east,yshift=0.2cm,xshift=-0cm},
			nodes near coords,
			every node near coord/.append style={font=\small},
			width=.51\textwidth,
			height=8cm,
			bar width=0.2cm,
			xmax=1.1,
			legend style={at={(0.9,-0.1)}},
			legend columns=3,
			title=Coefficient of Determination $R^2$
		]
		\addplot[x filter/.code={
				\ifnumequal{\thisrow{depth}}{10}{}{\def\pgfmathresult{}}
			}, fill=green!40!white] table [y=dataset, x=r2, col sep=comma] {plots/surrogatemodel/accuracy.csv};
		\addplot[x filter/.code={
				\ifnumequal{\thisrow{depth}}{20}{}{\def\pgfmathresult{}}
			}, fill=red!40!white] table [y=dataset, x=r2, col sep=comma] {plots/surrogatemodel/accuracy.csv};
			\legend{Depth 10\hspace*{.2cm}, Depth 20}
		\end{axis}
	\end{tikzpicture}
	\hspace*{-0.75cm}
	\begin{tikzpicture}
		\begin{axis}[
			xbar,
			symbolic y coords={wine-quality,spambase,sensorless-drive,satlog,mnist,magic,letter,covertype,bank,adult},
			yticklabel style={rotate=45,anchor=east,yshift=0.2cm,xshift=-0cm},
			yticklabels=\empty,
			ytick=data,
			nodes near coords,
			every node near coord/.append style={font=\small},
			width=.5\textwidth,
			height=8cm,
			bar width=0.2cm,
			xmax=1.1,
			legend style={at={(0.9,-0.1)}},
			legend columns=3,
			title=Kendall's Rank Correlation Coefficient (Kendall's tau)
		]
		\addplot[x filter/.code={
				\ifnumequal{\thisrow{depth}}{10}{}{\def\pgfmathresult{}}
			}, fill=green!40!white] table [y=dataset, x=kt, col sep=comma] {plots/surrogatemodel/accuracy.csv};
		\addplot[x filter/.code={
				\ifnumequal{\thisrow{depth}}{20}{}{\def\pgfmathresult{}}
			}, fill=red!40!white] table [y=dataset, x=kt, col sep=comma] {plots/surrogatemodel/accuracy.csv};
		\legend{Depth 10\hspace*{.2cm}, Depth 20}
		\end{axis}
	\end{tikzpicture}
	\caption{Determination of the Surrogate Model for Real Datasets}
	\vspace*{-0.5cm}
	\label{r2realdatasets}
      \end{figure*}
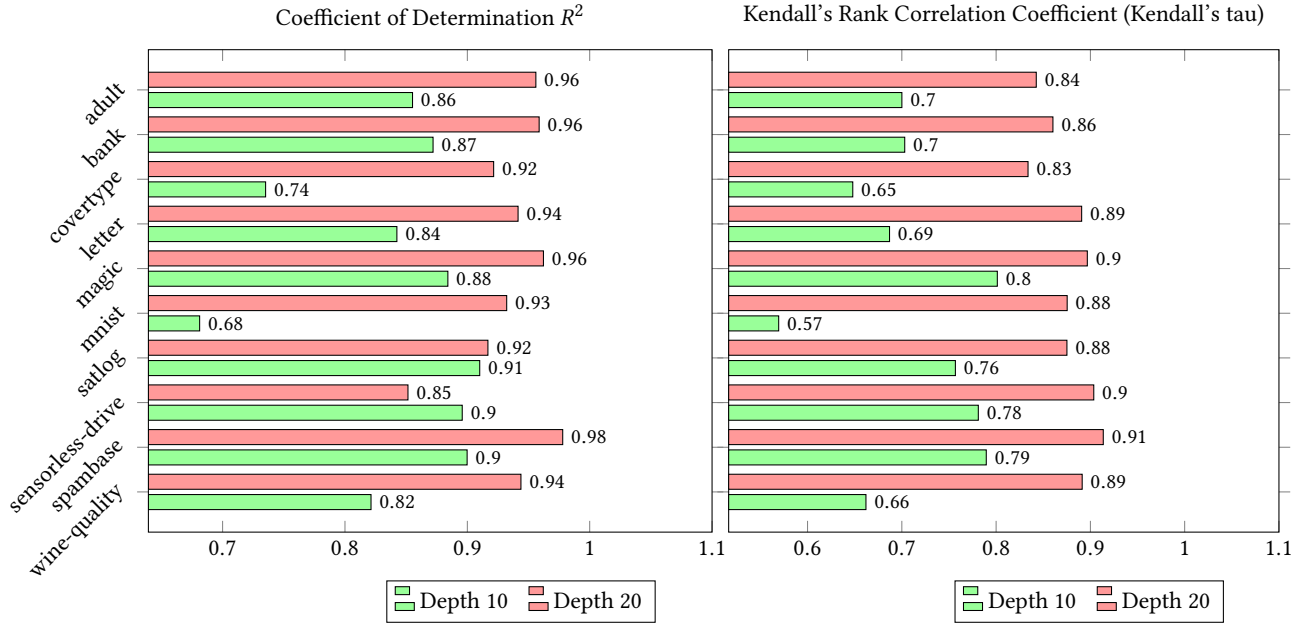

Since the surrogate model aims to serve as a compositional model for recursive  construction of an optimal solution, we investigate the accuracy of the absolute predictions of the surrogate model compared to WCET analysis.
Since the conducted experiment cannot cover all possible path combinations due to the massive computational demands, investigating the correlation in terms of absolute values helps to assess the extrapolation of the surrogate model beyond this experiment. Therefore,
we condense the difference by evaluating the coefficient of determination ($R^2$) between the surrogate model prediction and the analyzed WCET of each path. Furthermore, since the target of WCET optimization is to derive a tree with a minimized WCEP, the ranking of predictions by the surrogate model plays an important role, since only the path with the highest rank forms the WCEP.
Hence, we additionally investigate the ranking correlation between predictions from the surrogate model and the analyzed WCET in the form of Kendall's rank correlation coefficient (Kendall's tau).
The resulting values are illustrated in \Cref{r2realdatasets}.

\Cref{r2realdatasets}, on the x axis, depicts the resulting $R^2$ values
in the left plot and the Kendall's tau values
in the right plot, respectively. For each dataset, one bar for a real tree with a maximal depth of 10 and 20 is shown.
The studied trees show
 high correlation values.
Naturally, the prediction quality of the surrogate model depends on the considered dataset. For instance, the mnist dataset
has a significantly lower correlation in terms of absolute values for trees of a maximal depth of 10 than the satlog dataset.
The reason is that according to the distribution of observations in the dataset, training results in differently shaped trees. The surrogate model is created on a generalized variation of tree shapes, which naturally cannot apply equally well to all real scenarios.
Nevertheless, the linear compositional surrogate model provides
a good estimation of the analyzed WCET in this experiment.
The strong correlation in terms of ranking highlights the surrogate model's
robustness
when identifying the WCEP.

\section{Algorithmic Optimization}
\newcommand{\taken}{\textit{taken}}
\newcommand{\untaken}{\textit{untaken}}
\newcommand{\depth}{\textit{depth}}
\label{sec:optimization}

\rev{1}{at the very start of section 6, it would be nice to relate the introduced variables with the ones used in equation (3) (simply say that delta and tau represent the same variables as in equation 3, it takes some time figuring out they are the same variables, because you used different terms to define them in sections 5 and 6).}

Since the surrogate model developed in Section~\ref{sec:surrogate} is linear and only dependent on local path parameters, we can utilize
it
in the WCET optimization of decision trees.
Consider a hypothetic example, where the cost factor for the depth is $\delta=2$, the cost for the taken slots is $\gamma=1$, and the constant $\sigma$ is $0$. \Cref{fig:badexp} depicts a simple decision tree, which is implemented na\"ively, without any optimization of branch conditions. The unoptimized implementation results in an estimated WCET of $6$. \Cref{fig:goodexp} shows an example of an optimized implementation of the same tree, where the root node has a flipped branch condition. The surrogate model output for the left subtree under the root node is $2$, for the right subtree $3$. Thus, when flipping the branch condition, the overall estimated WCET is reduced to $5$.

\begin{figure}[t]

  \subfloat[Unoptimized taken/untaken blocks]{
    \label{fig:badexp}
    \centering
    \scalebox{0.7}{
      \begin{tikzpicture}
        \node[draw, circle, minimum width=0.5cm] (r) at (0,0) {};

        \node[draw, circle, minimum width=0.5cm] (rl) at (-2,-2) {};
        \node[draw, circle, minimum width=0.5cm] (rr) at (2,-2) {};

        \node[draw, circle, minimum width=0.5cm] (rrl) at (1,-4) {};
        \node[draw, circle, minimum width=0.5cm] (rrr) at (3,-4) {};

        \draw[->] (r) --node[above, yshift=.3cm, xshift=-.1cm] {Untaken}(rl);
        \draw[->] (r) --node[above, yshift=.3cm] {Taken}(rr);

        \draw[->] (rr) --node[above, yshift=.1cm, xshift=-.4cm] {Untaken}(rrl);
        \draw[->] (rr) --node[above, yshift=.1cm, xshift=.4cm] {Taken}(rrr);
      \end{tikzpicture}
    }
  }
  \subfloat[Optimized taken/untaken blocks]{
    \label{fig:goodexp}
    \centering
    \scalebox{0.7}{
      \begin{tikzpicture}
        \node[draw, circle, minimum width=0.5cm] (r) at (0,0) {};

        \node[draw, circle, minimum width=0.5cm] (rl) at (-2,-2) {};
        \node[draw, circle, minimum width=0.5cm] (rr) at (2,-2) {};

        \node[draw, circle, minimum width=0.5cm] (rrl) at (1,-4) {};
        \node[draw, circle, minimum width=0.5cm] (rrr) at (3,-4) {};

        \draw[->] (r) --node[above, yshift=.3cm] {Taken}(rl);
        \draw[->] (r) --node[above, yshift=.3cm, xshift=.1cm] {Untaken}(rr);

        \draw[->] (rr) --node[above, yshift=.1cm, xshift=-.4cm] {Untaken}(rrl);
        \draw[->] (rr) --node[above, yshift=.1cm, xshift=.4cm] {Taken}(rrr);
      \end{tikzpicture}
    }
  }
  \caption{Optimization of Decision Trees}
  \label{fig:bad+goodexp}
\end{figure}
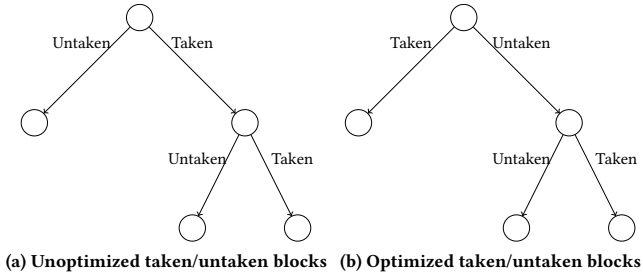

Let $G=(V,E)$ be a directed graph representing the binary decision tree,
where $V$ is the set of the nodes and $E$ is the set of the directed
edges. We call the node
$v \in V$ with no incoming edges
the root,  and a node $v \in V$ with no outgoing edges a leaf.
Every
binary decision tree has exactly one root.
For brevity, let $ST(v)$ be the subtree rooted at $v$.
That is, if $v$ is a leaf, then $ST(v)$ only contains $v$
without any outgoing edge, and if $v$ is the root, then $ST(v)$ is
the graph~$G$.

Let $\lambda: E \rightarrow \{T, U\}$ be a labeling function which
labels a directed edge $e \in E$ as a taken slot $T$ or an untaken $U$
slot. By the definition of a binary decision tree, for a node
$v \in V$ with two children nodes $u, u' \in V$, either
\begin{enumerate}
\item $\lambda((v, u))$ is $T$ and $\lambda((v, u'))$ is $U$, or
\item $\lambda((v, u))$ is $U$ and $\lambda((v, u'))$ is $T$.
\end{enumerate}

Let $L(ST(v))$ be the set of leaves of a rooted subtree $ST(v)$, i.e.,
$L(ST(v)) = \{\ell \mid \ell \in ST(v) \mbox{ and } \ell \mbox{ is a leaf}\}$.  Let $r$ be
the root of $G$. For a rooted tree, there is one unique path from $r$
to a leaf $\ell \in L$. Let $\depth(p, \ell)$ be the depth (i.e., number of
edges) of the path from $p \in V$ to $\ell \in L$ and let
$taken(\lambda, p, \ell)$ be the number of edges in the path from $p$
to $\ell$ which are labeled as taken in the labeling function~$\lambda$.

The WCET optimization problem based on our linear surrogate model is
to find a labeling function $\lambda$ such that the estimated WCET $C(ST(r), \lambda)$ is
minimized, where
\begin{equation}\small
  C(ST(r), \lambda) = \sigma+\max_{\ell \in L(ST(r))} \left\{
    \begin{array}{ll}
        & \delta \cdot \depth(r, \ell)\\
      + &\gamma \cdot \taken(\lambda, r, \ell)
    \end{array}  \right\}.\label{eq:wcet-surrogate}
\end{equation}
By definition, the depth of a path is equal to the number of taken edges plus the number of untaken edges.
Let $untaken(\lambda, v, \ell)$ be $\depth(v, \ell) - taken(\lambda, v, \ell)$.
With $\pi=\delta+\gamma$, we can reformulate Eq.~(\ref{eq:wcet-surrogate}) into
\begin{equation}\small
  C(ST(r), \lambda) = \sigma+\max_{\ell \in L(ST(r))} \left\{
    \begin{array}{ll}
        & \delta \cdot \untaken(\lambda, v, \ell)\\
      + &\pi \cdot \taken(\lambda, v, \ell)
    \end{array}
  \right\}.
  \label{eq:wcet-surrogate-reduced-pi}
\end{equation}
As in our surrogate model, $\delta \geq 0$ and $\gamma \geq 0$,
thus, $\pi \geq \delta \geq 0$.

Let the root $r$ have two children $(c_1, c_2)$ in $G$. For a
given~$\lambda$, based on
(\ref{eq:wcet-surrogate-reduced-pi}),
the WCET of the
labeling function $\lambda$ is
\begin{eqnarray}\small
  \label{eq:wcet-surrogate-expand-case1}
  C(ST(r), \lambda) = \sigma + \max\left\{
	\begin{array}{ll}
		& \delta + C(ST(c_1), \lambda)\\
  		+ &\pi+
		  C(ST(c_2), \lambda)
	\end{array}\right\}\\
   \mbox{if $\lambda((r,c_1))$ is untaken and
$\lambda((r,c_2))$ is taken} \nonumber
\end{eqnarray}
or
\begin{eqnarray}
  \label{eq:wcet-surrogate-expand-case2}
  C(ST(r), \lambda) = \sigma + \max\left\{
	\begin{array}{ll}
		& \pi + C(ST(c_1), \lambda)\\
  		+ &\delta+
		  C(ST(c_2), \lambda)
	\end{array}\right\}\\
   \mbox{if $\lambda((r,c_1))$ is taken and
$\lambda((r,c_2))$ is untaken}  \nonumber
\end{eqnarray}

As there are only two options to label the two edges $(r, c_1)$ and
$(r, c_2)$, with given $C(ST(c_1), \lambda)$ and $C(ST(c_2), \lambda)$, the optimal
decision to label $(r, c_1)$ and $(r, c_2)$ is to find the minimum
solution between
Eq.~(\ref{eq:wcet-surrogate-expand-case1})~and~Eq.~(\ref{eq:wcet-surrogate-expand-case2}). That
is, if Eq.~(\ref{eq:wcet-surrogate-expand-case1}) is smaller than
Eq.~(\ref{eq:wcet-surrogate-expand-case2}), then $\lambda((r,c_1))$
should be untaken and $\lambda((r,c_2))$ should be taken; otherwise,
$\lambda((r,c_1))$ should be taken and $\lambda((r,c_2))$ should be
untaken.

Suppose that we have two labeling functions  already: (i)~$\lambda_1$ for the
subtree rooted at $c_1$ and (ii)~$\lambda_2$ for the subtree rooted at~$c_2$.  The following lemma shows that the labeling decisions
for the edges $(r, c_1)$ and $(r, c_2)$ can be greedily done by
comparing $C(ST(c_1), \lambda_1)$ and $C(ST(c_2), \lambda_2)$.
\begin{lemma}
  \label{lemma:subopt}
  For a given surrogate model of the WCET defined in
  Eq.~(\ref{eq:wcet-surrogate-reduced-pi}) with
  $\pi \geq \delta \geq 0$ and two labeling functions $\lambda_1$ for the
  subtree rooted at $c_1$ and $\lambda_2$ for the subtree rooted at
  $c_2$, the condition
  \[
    C(ST(c_1), \lambda_1) \leq C(ST(c_2), \lambda_2)
  \]
  implies that
  \begin{eqnarray}
    \label{eq:binary-relation}
     & \max\left\{\pi + C(ST(c_1), \lambda_1), \delta+ C(ST(c_2), \lambda_2)\right\} \nonumber\\
    \leq & \max\left\{\delta + C(ST(c_1), \lambda_1), \pi+ C(ST(c_2), \lambda_2)\right\}.
  \end{eqnarray}
  Therefore, if $C(ST(c_1), \lambda_1) \leq C(ST(c_2), \lambda_2)$,
  then $\lambda((r,c_1))$ should be taken and $\lambda((r,c_2))$
  should be untaken; otherwise $\lambda((r,c_1))$ should be untaken
  and $\lambda((r,c_2))$ should be taken. Ties can be
  broken arbitrarily. The subtree rooted at $c_1$ ($c_2$, respectively) is labeled exactly as
  $\lambda_1$ ($\lambda_2$, respectively).
\end{lemma}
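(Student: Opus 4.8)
The plan is to prove the displayed inequality~(\ref{eq:binary-relation}) by a short domination argument, and then to read off the labeling conclusion directly from the comparison of Eq.~(\ref{eq:wcet-surrogate-expand-case1}) and Eq.~(\ref{eq:wcet-surrogate-expand-case2}). To lighten the notation I would abbreviate $a = C(ST(c_1), \lambda_1)$ and $b = C(ST(c_2), \lambda_2)$, so the hypothesis reads $a \leq b$ and the goal becomes
\[
  \max\{\pi + a,\ \delta + b\} \;\leq\; \max\{\delta + a,\ \pi + b\}.
\]

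First I would pin down the right-hand side exactly. Because $\pi \geq \delta \geq 0$ and $a \leq b$, the quantity $\pi + b$ dominates $\delta + a$ (using $\pi \geq \delta$ together with $b \geq a$), so the right-hand maximum is simply $\pi + b$. Next I would bound the two arguments of the left-hand maximum against this value: $\pi + a \leq \pi + b$ follows from $a \leq b$, and $\delta + b \leq \pi + b$ follows from $\delta \leq \pi$. Since both arguments are therefore at most $\pi + b$, the left-hand maximum is at most $\pi + b$, which is exactly the right-hand side. This establishes the inequality.

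To obtain the labeling statement, I would match the two sides of the inequality with the two admissible labelings of the edges $(r, c_1)$ and $(r, c_2)$. The left-hand side $\max\{\pi + a,\ \delta + b\}$ is precisely the per-subtree ($\sigma$-free) cost incurred when $(r, c_1)$ is \emph{taken} (weight $\pi$) and $(r, c_2)$ is \emph{untaken} (weight $\delta$), i.e.\ the bracket of Eq.~(\ref{eq:wcet-surrogate-expand-case2}); the right-hand side $\max\{\delta + a,\ \pi + b\}$ is the cost of the opposite labeling, i.e.\ the bracket of Eq.~(\ref{eq:wcet-surrogate-expand-case1}). Thus $a \leq b$ certifies that labeling $(r, c_1)$ as taken is no worse than labeling it untaken, which is the claimed greedy rule; the symmetric hypothesis $b \leq a$ yields the complementary rule, and at equality both brackets evaluate to $\pi + \max\{a,b\}$, so ties may be broken arbitrarily. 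Finally, since $C(ST(c_i), \cdot)$ depends only on the edge labels strictly inside $ST(c_i)$, fixing the two root edges leaves $\lambda_1$ and $\lambda_2$ untouched, so the assembled labeling coincides with $\lambda_1$ on $ST(c_1)$ and $\lambda_2$ on $ST(c_2)$.

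I do not expect a genuine obstacle here: the argument reduces to a two-line domination inequality. The only point that deserves care is being explicit that $\pi + b$ serves simultaneously as the right-hand maximum \emph{and} as an upper bound for both left-hand arguments, since this is exactly where the two hypotheses $\pi \geq \delta$ and $a \leq b$ each enter (once apiece). The remainder is bookkeeping that connects the algebraic inequality back to the case split in Eq.~(\ref{eq:wcet-surrogate-expand-case1}) and Eq.~(\ref{eq:wcet-surrogate-expand-case2}).
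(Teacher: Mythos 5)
Your proof is correct and follows essentially the same route as the paper, whose proof is a one-line appeal to ``the definition and simple arithmetic of inequalities'' under the hypotheses $a \leq b$ and $\pi \geq \delta \geq 0$. Your write-up simply makes that arithmetic explicit (identifying $\pi + b$ as the right-hand maximum and as a common upper bound for both left-hand arguments) and spells out the correspondence to Eqs.~(\ref{eq:wcet-surrogate-expand-case1}) and~(\ref{eq:wcet-surrogate-expand-case2}), which the paper leaves implicit.
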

\begin{proof}
  This comes from the definition and simple arithmetic of inequalities
  by the condition $C(ST(c_1), \lambda_1) \leq C(ST(c_2), \lambda_2)$ and the assumption
  $\pi \geq \delta \geq 0$.
\end{proof}

\newcommand{\algname}{\textsc{SurrogateOpt}}

\begin{algorithm}[t]
	\begin{algorithmic}[1]
		\STATE \textbf{Given:} A binary tree $G = (V, E)$ rooted at $r \in V$.
		\STATE \textbf{Output:} A labeling $\lambda$ that minimizes the tree's WCET based on the surrogate model.
		\IF{$r$ is leaf node}
		\RETURN empty labeling
		\ELSE
		\STATE let the two children of $r$ be $c_1$ and $c_2$ in $V$
		\STATE let $\lambda_1 = \algname(ST(c_1), c_1)$
		\STATE let $\lambda_2 = \algname(ST(c_2), c_2)$
		\STATE let $\lambda = \lambda_1 \cup \lambda_2$
                \IF{$C(ST(c_1), \lambda_1) \leq C(ST(c_2), \lambda_2)$ )} \label{algo-opt-condition}
                \RETURN  $\lambda \cup \{(r, c_1) \mapsto T, (r, c_2) \mapsto U\}$
                \ELSE
                \RETURN $\lambda \cup \{(r, c_2) \mapsto T, (r, c_1) \mapsto U\}$
		\ENDIF
		\ENDIF
	\end{algorithmic}
	\caption{$\algname(G, r)$}
	\label{algo:mapnodeoptimal-rev}
      \end{algorithm}

With the above analysis, we reach an optimal greedy strategy which
constructs the labels from the leaves in $L(ST(r))$ back to the root
$r$. Algorithm~\ref{algo:mapnodeoptimal-rev} illustrates the
pseudocode in a recursive form.

We prove the optimality in the following.

\begin{theorem}[Best-Case Labeling]
  \label{thm:best-case}
  Suppose a given surrogate model of the WCET model defined in
  Eq.~(\ref{eq:wcet-surrogate}). \Cref{algo:mapnodeoptimal-rev}
  derives a labeling function $\lambda$ that minimizes the estimated WCET
  defined in Eq.~(\ref{eq:wcet-surrogate}).
\end{theorem}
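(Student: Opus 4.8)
The plan is to prove the theorem by structural induction on the subtree passed to \algname, combining the classical dynamic-programming ingredients of \emph{optimal substructure} and a \emph{greedy local choice}. Since $\sigma$ is a constant independent of $\lambda$, minimizing $C(ST(r),\lambda)$ in Eq.~(\ref{eq:wcet-surrogate}) is equivalent to minimizing the inner maximum; I would therefore write $C(ST(v),\lambda)=\sigma+W(ST(v),\lambda)$, where $W$ denotes the $\max$-term of Eq.~(\ref{eq:wcet-surrogate-reduced-pi}), with $W(ST(\ell),\lambda)=0$ for a leaf $\ell$. The decomposition in Eqs.~(\ref{eq:wcet-surrogate-expand-case1})--(\ref{eq:wcet-surrogate-expand-case2}) then reads, for a node $r$ with children $c_1,c_2$,
\[
  W(ST(r),\lambda)=\max\{a_1+W(ST(c_1),\lambda),\; a_2+W(ST(c_2),\lambda)\},
\]
where $\{a_1,a_2\}=\{\delta,\pi\}$ is fixed by the labeling of the two root edges, and each $W(ST(c_i),\lambda)$ depends only on the restriction of $\lambda$ to the edges of $ST(c_i)$. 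This independence is the substructure property I would exploit.

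For the base case, a leaf forces $L(ST(r))=\{r\}$ and the empty labeling with $W=0$, which is trivially optimal and matches the base-case return of \Cref{algo:mapnodeoptimal-rev}. For the inductive step, I would assume the recursive calls return $\lambda_1,\lambda_2$ that minimize $W(ST(c_1),\cdot)$ and $W(ST(c_2),\cdot)$, respectively. Let $\lambda^\star$ be any labeling of $ST(r)$ and let $\lambda^\star|_{c_i}$ be its restriction to $ST(c_i)$. Because $\max\{\cdot,\cdot\}$ is non-decreasing in each argument and, by the induction hypothesis, $W(ST(c_i),\lambda^\star|_{c_i})\ge W(ST(c_i),\lambda_i)$, replacing the subtree labelings of $\lambda^\star$ by $\lambda_1,\lambda_2$ while keeping the same root-edge assignment can only decrease or preserve $W(ST(r),\cdot)$. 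Hence there is an optimal labeling whose subtrees are labeled exactly as $\lambda_1$ and $\lambda_2$, so the only remaining degree of freedom is the orientation of the two root edges.

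It then remains to choose between the two root-edge orientations, i.e.\ between $\max\{\delta+W(ST(c_1),\lambda_1),\,\pi+W(ST(c_2),\lambda_2)\}$ and $\max\{\pi+W(ST(c_1),\lambda_1),\,\delta+W(ST(c_2),\lambda_2)\}$. This is precisely the comparison settled by \Cref{lemma:subopt}: assigning the taken label (cost $\pi$) to the child with the smaller subtree cost yields the smaller maximum, which is the decision taken in line~\ref{algo-opt-condition} of the algorithm (the comparison on $C$ and on $W$ coincide since $C=\sigma+W$). Combining the two steps gives $W(ST(r),\lambda^{\mathrm{alg}})\le W(ST(r),\lambda^\star)$ for every $\lambda^\star$, hence $\lambda^{\mathrm{alg}}$ is optimal, completing the induction.

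I expect the main obstacle to be the substructure/monotonicity step rather than the greedy comparison: one must argue carefully that globally optimizing $ST(r)$ cannot require suboptimal labelings of the subtrees, which hinges on (a) the fact that $W(ST(c_i),\cdot)$ is a function of the subtree restriction alone, and (b) the monotonicity of $\max$ under the fixed, label-independent additive offsets $a_1,a_2$. The greedy step itself is immediate once \Cref{lemma:subopt} is invoked. A minor bookkeeping point worth stating explicitly is that the constant $\sigma$, which the recursion in Eqs.~(\ref{eq:wcet-surrogate-expand-case1})--(\ref{eq:wcet-surrogate-expand-case2}) formally reintroduces at every level, plays no role in the argmin and is best factored out from the start by working with $W$.
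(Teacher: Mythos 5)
Your proof is correct and takes essentially the same route as the paper: the paper's own proof is a one-sentence appeal to a ``standard swapping argument'' based on Lemma~\ref{lemma:subopt}, and your structural induction---optimal substructure via monotonicity of $\max$ plus the greedy root-edge decision settled by Lemma~\ref{lemma:subopt}---is exactly the rigorous elaboration of that sketch. Your observation that the constant $\sigma$ formally reintroduced at each level of Eqs.~(\ref{eq:wcet-surrogate-expand-case1})--(\ref{eq:wcet-surrogate-expand-case2}) plays no role and is best factored out by working with $W$ is a sound bookkeeping fix that the paper leaves implicit.
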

\begin{proof}[Proof]
  The optimality
   can be proven by a standard swapping argument. That
  is, for any labeling function of the tree, we can use
  Lemma~\ref{lemma:subopt} to show that swapping from the given labeling
  function towards our derived labeling function does not increase the
  estimated WCET defined in Eq.~(\ref{eq:wcet-surrogate-reduced-pi}).
\end{proof}

Interestingly, we can also show that the structure of
Lemma~\ref{lemma:subopt} leads to the following corollary to
maximize the worst-case execution time, which can be used to
demonstrate the potential of WCET optimization.
\begin{corollary}[Worst-Case Labeling]
  \label{cor:worst-case}
  By inverting the greedy decision of
  Algorithm~\ref{algo:mapnodeoptimal-rev}, i.e., by changing the
  condition in Line~\ref{algo-opt-condition} to
  $C(ST(c_1), \lambda_1) > C(ST(c_2), \lambda_2)$ instead, the revised algorithm derives a
  labeling function $\lambda^{worst}$ that maximizes the estimated WCET defined
  in Eq.~(\ref{eq:wcet-surrogate}).
\end{corollary}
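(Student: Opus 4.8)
The plan is to mirror the swapping argument behind Theorem~\ref{thm:best-case}, but to read Lemma~\ref{lemma:subopt} ``in reverse.'' Its inequality~(\ref{eq:binary-relation}) compares, for fixed subtree labelings $\lambda_1$ and $\lambda_2$, the two admissible ways of labeling the edges $(r,c_1)$ and $(r,c_2)$, and identifies which of the two resulting local maxima is the larger one. Concretely, when $C(ST(c_1),\lambda_1) \le C(ST(c_2),\lambda_2)$, the labeling that makes $(r,c_1)$ untaken and $(r,c_2)$ taken attains the right-hand side of~(\ref{eq:binary-relation}), which is the \emph{larger} of the two maxima; symmetrically when the reverse inequality holds. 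The inverted condition in Line~\ref{algo-opt-condition} selects exactly this larger option at every internal node. Thus the first step is merely to observe that Lemma~\ref{lemma:subopt}, without any change, already certifies the local optimality of the inverted decision for \emph{maximization}: whichever branch the inverted condition takes, it realizes the larger of the two expressions in~(\ref{eq:binary-relation}).

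Next I would make explicit a monotonicity property of the cost recursion. By Eqs.~(\ref{eq:wcet-surrogate-expand-case1}) and~(\ref{eq:wcet-surrogate-expand-case2}), the value $C(ST(r),\lambda)$ is a maximum of two terms, each of the form (constant edge weight) $+\,C(ST(c_i),\lambda)$; hence $C(ST(r),\lambda)$ is nondecreasing in each of the child costs $C(ST(c_1),\lambda)$ and $C(ST(c_2),\lambda)$. This decouples the two concerns in the swapping argument: replacing a subtree's labeling by one of at least equal cost cannot decrease the parent's cost, and, independently, fixing the subtree labelings while flipping the local edge labeling toward the inverted decision cannot decrease the parent's cost either (by the reverse reading of Lemma~\ref{lemma:subopt}).

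With these two ingredients I would run the swapping argument bottom-up, by induction on subtree height. Given an arbitrary labeling $\lambda$, the goal is $C(ST(r),\lambda) \le C(ST(r),\lambda^{worst})$. At the root I first replace the labelings of the subtrees rooted at $c_1$ and $c_2$ by the restrictions of $\lambda^{worst}$; by the induction hypothesis these have at least as large a cost, so by monotonicity the cost at $r$ does not decrease. I then flip the local edge labeling at $r$ to match the inverted greedy decision; since the subtree costs are now exactly $C(ST(c_1),\lambda^{worst})$ and $C(ST(c_2),\lambda^{worst})$, Lemma~\ref{lemma:subopt} read in reverse guarantees this flip again does not decrease the cost at $r$. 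Chaining the two inequalities yields the claim at the root, and hence $\lambda^{worst}$ maximizes Eq.~(\ref{eq:wcet-surrogate}).

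The main obstacle I anticipate is ensuring that the two inequalities compose in the correct direction when the subtree labelings and the local edge decision are altered \emph{together}. Monotonicity is precisely what makes this clean: it guarantees that the local edge flip—justified for the particular child costs attained under $\lambda^{worst}$—remains valid after those child costs have already been raised. The care point is to apply Lemma~\ref{lemma:subopt} only once the subtree labelings have been fixed to $\lambda^{worst}$, so that the comparison $C(ST(c_1),\lambda^{worst}) \le C(ST(c_2),\lambda^{worst})$ (or its negation) matches exactly the branch of the inverted condition that \Cref{algo:mapnodeoptimal-rev} takes at that node.
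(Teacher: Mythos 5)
Your proof is correct and takes essentially the same route as the paper: the paper gives no separate proof of the corollary, intending exactly the swapping argument of Theorem~\ref{thm:best-case} with the inequality of Lemma~\ref{lemma:subopt} read in reverse, so that the inverted condition always selects the larger of the two local maxima. Your explicit monotonicity observation and the induction on subtree height simply fill in details the paper leaves implicit.
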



\section{Evaluation}
\label{sec:eval}

\rev{1}{In the performance evaluation, if would be interesting to compare (for time considerations on one tree only) the proposed technique with a technique that explores all paths and evaluates their WCETs (to confirm that the linear model is effectively sufficient to make the optimizations).}

After showing the optimality of the proposed decision tree optimization algorithm w.r.t. the surrogate model, we provide experimental evaluation with a focus on the relation to the analyzed WCET of the corresponding decision trees. We further compare the optimized implementation with existing ACET-optimization algorithms from the literature.

We consider a variety of trained decision trees, as in \Cref{sec:surrogatecompare}.
We use the arch-forest  framework~\cite{8594826} to train decision trees on different data sets, namely adult, bank, covertype, letter, magic, mnist, satlog, sensorless-drive, spambase, and wine-quality from the UCI
 ML repository \cite{asuncion2007uci}.

We train the decision trees without specific hyperparameter tuning.
For each data set, we train seven trees, each with a specified maximal
depth of \(1\), \(5\), \(10\), \(20\), \(30\), \(40\), and \(50\).
Although these datasets train adequately on large trees, as stated before, we cover a broader range of maximal depths. In the following results, we consider all trees with at least a maximal depth of 10 as an adequate shape. We do not consider maximal depths of more than 50, since the considered data sets do not result in deeper real depths.
We note that the trees do not necessarily reach their maximum allowed depth and balance due to the randomized procedures in the training process, and consequently their sizes and balance differ.
Since the maximal depth is not always reached during the training process, especially for smaller data sets, we refer to the actual depth in the following discussions.

Such trained trees are available as high-level-specified data structures, which are then implemented in C code by arch-forest. In addition to a set of implementations, provided by arch-forest, we implement the surrogate model optimization algorithm as a new code generator in arch-forest.
Those implementations are analyzed by LLVMTA, which uses static analysis to derive the analyzed WCET of the decision tree inference implementations. Consequently, LLVMTA considers all possible paths through the code of a tree and determines the analyzed WCET by identifying the WCEP.
The inference of specific data points has no impact, since we are only interested in the worst case.
We compare to the following decision tree implementations:

\begin{itemize}[leftmargin=*]
    \item \textbf{Standard} is a straightforward if-else tree implementation of the decision tree, referred to as codegen in the literature. This implementation is provided under the name Standard by arch-forest.
    Nodes are translated to if-else-blocks, the left subtree is always mapped to the if-block and the right subtree is always mapped to the else-block.
    No further
    optimization, e.g.,~flipping of the
    if-block and the else-block, is applied.
    \item \textbf{Chen et~al. Swap}~\cite{chen2022efficient} is an optimization of the aforementioned \textbf{Standard}.
    It considers knowledge about prefetching and preemption in the CPU caches and
    arranges paths of the decision tree in a cache-friendly manner w.r.t. the probabilistic distribution of data points in the training dataset.
    These probabilities are used to group highly probable nodes together in cache lines to improve cache locality.
    This implementation is motivated by ACET, since it optimizes the most probable case for the given data set.
    It builds on the fact that the tree is inferred repetitively, and highly probable nodes are not evicted from the cache and therefore exhibit low access latencies.
    The algorithm only flips branch conditions to achieve the aforementioned target. Hence, it utilizes the same tuning knobs as our proposed optimization algorithm.
    \item \textbf{Chen et~al. Path}~\cite{chen2022efficient} is an extension
    of the {Swap} version,
    where the length of jumps on a taken branch is also considered as an optimization target. The optimization builds an if-else tree as a kernel until a certain capacity is exceeded.
    The remaining subtrees are implemented at a disjoint memory location and  additional branches are introduced to maintain logical correctness.
    This optimization uses an additional knob of implementation tuning, which, however, due to the considered additional cost for taken branches in the surrogate model, is not a
    suitable candidate for the surrogate-model-based optimization algorithm.
    We note that this optimization results in exactly the same implementation as the Swap version, as long as the entire tree fits into the considered kernel, which happens
    for all considered cases in this evaluation, except the trees generated
    for the covertype dataset.
    \item \textbf{SurrogateOpt}
    implements \Cref{algo:mapnodeoptimal-rev}
    using the presented surrogate model. Evaluated trees
    where the actual depth exceeds $18$ are applied to the surrogate model for depth $18$, as described in \Cref{sec:surrogatemodel}.
    \item \textbf{Inverted}, defined in Corollary~\ref{cor:worst-case}, implements the same optimization as \textbf{SurrogateOpt}, but inverts the decision of the surrogate model. Instead of minimizing a tree's estimated WCET,
    it is maximized,
    providing an upper bound w.r.t. the surrogate model.
\end{itemize}

We adopt the surrogate model designed in
Section~\ref{sec:surrogatemodel} for depth
$2, 4, 6, 8, 10, 12, 14, 16, 18$. For a given decision tree with depth
$d$, we pick the corresponding surrogate model by flooring the actual
maximal depth to the next surrogate model.

\subsection{Results}
\begin{figure*}
    \centering
    \begin{minipage}{.48\textwidth}
        \includegraphics[width=\textwidth]{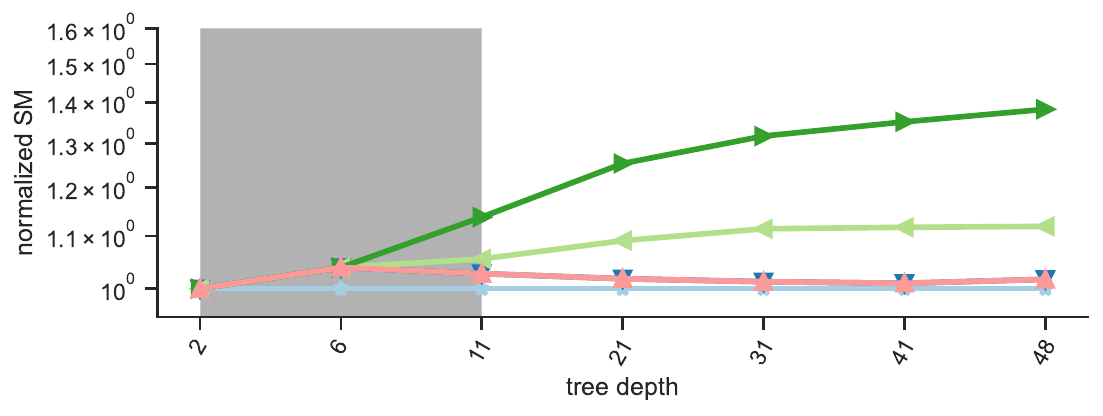}
        \vspace{-.8cm}
        \begin{center}
            adult
        \end{center}
        \vspace{-.1cm}
    \end{minipage}
    \begin{minipage}{.48\textwidth}
        \includegraphics[width=\textwidth]{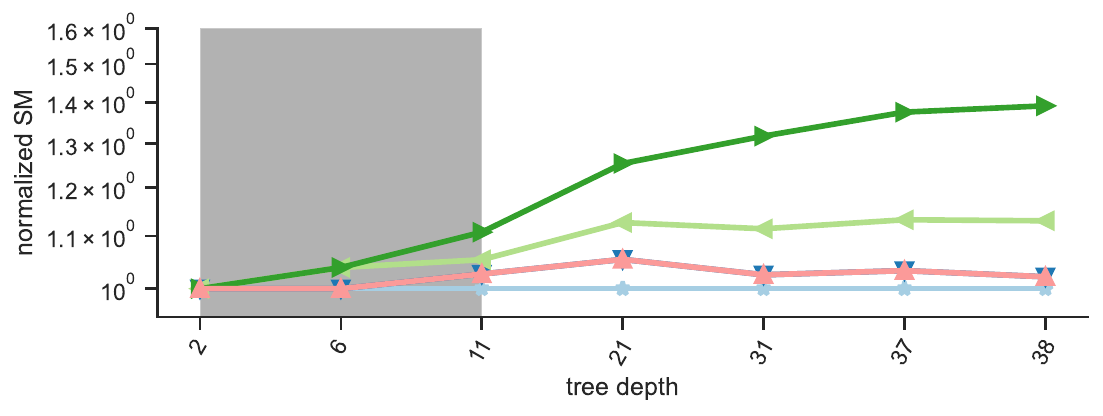}
        \vspace{-.8cm}
        \begin{center}
            bank
        \end{center}
        \vspace{-.1cm}
    \end{minipage}
    \begin{minipage}{.48\textwidth}
        \includegraphics[width=\textwidth]{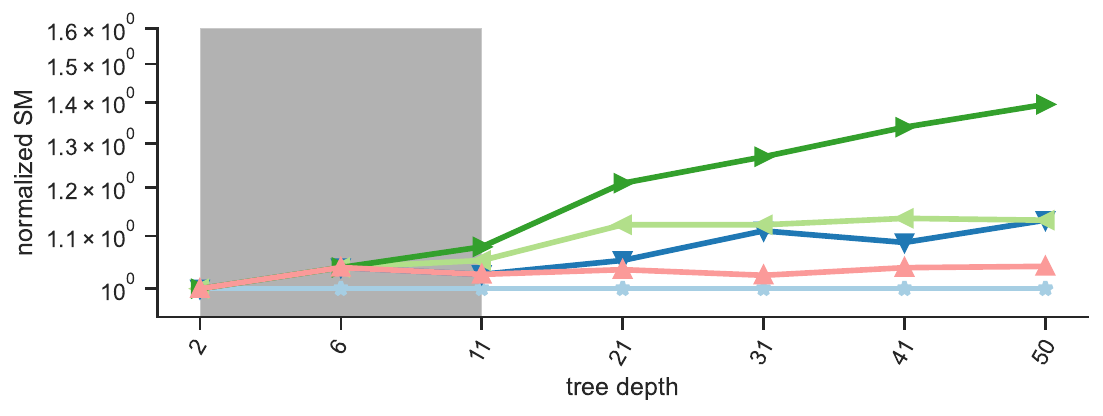}
        \vspace{-.8cm}
        \begin{center}
            covertype
        \end{center}
        \vspace{-.1cm}
    \end{minipage}
    \begin{minipage}{.48\textwidth}
        \includegraphics[width=\textwidth]{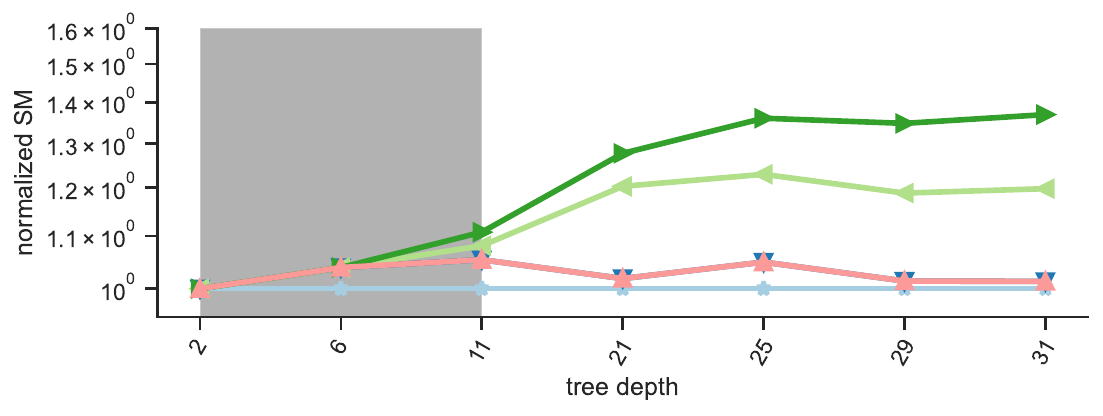}
        \vspace{-.8cm}
        \begin{center}
            letter
        \end{center}
        \vspace{-.1cm}
    \end{minipage}
    \begin{minipage}{.48\textwidth}
        \includegraphics[width=\textwidth]{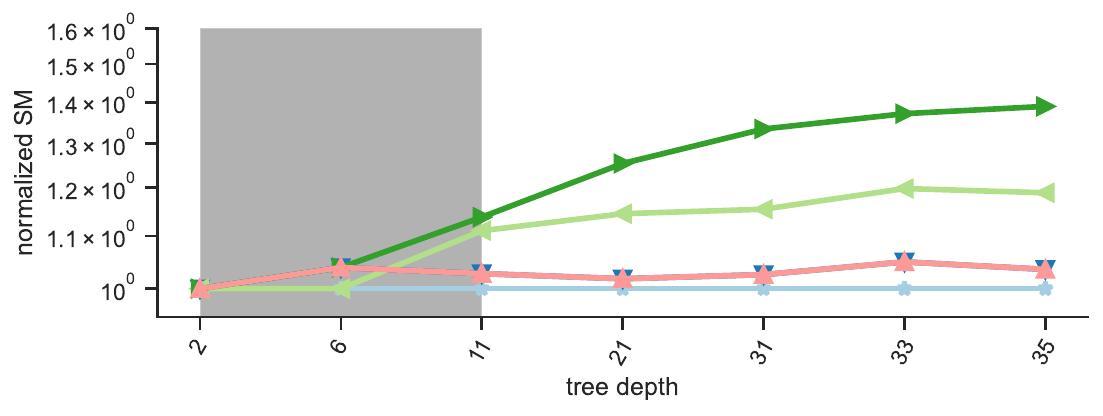}
        \vspace{-.8cm}
        \begin{center}
            magic
        \end{center}
        \vspace{-.1cm}
    \end{minipage}
    \begin{minipage}{.48\textwidth}
        \includegraphics[width=\textwidth]{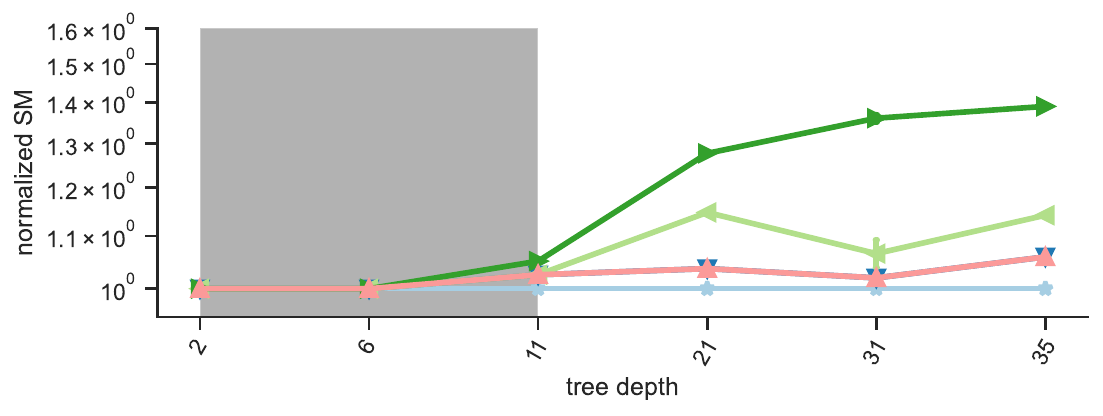}
        \vspace{-.8cm}
        \begin{center}
            mnist
        \end{center}
        \vspace{-.1cm}
    \end{minipage}
    \begin{minipage}{.48\textwidth}
        \includegraphics[width=\textwidth]{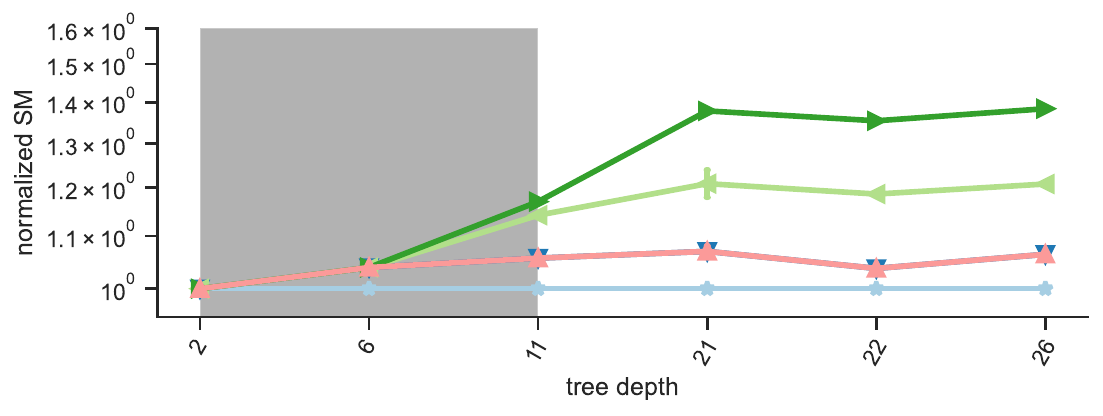}
        \vspace{-.8cm}
        \begin{center}
            satlog
        \end{center}
        \vspace{-.1cm}
    \end{minipage}
    \begin{minipage}{.48\textwidth}
        \includegraphics[width=\textwidth]{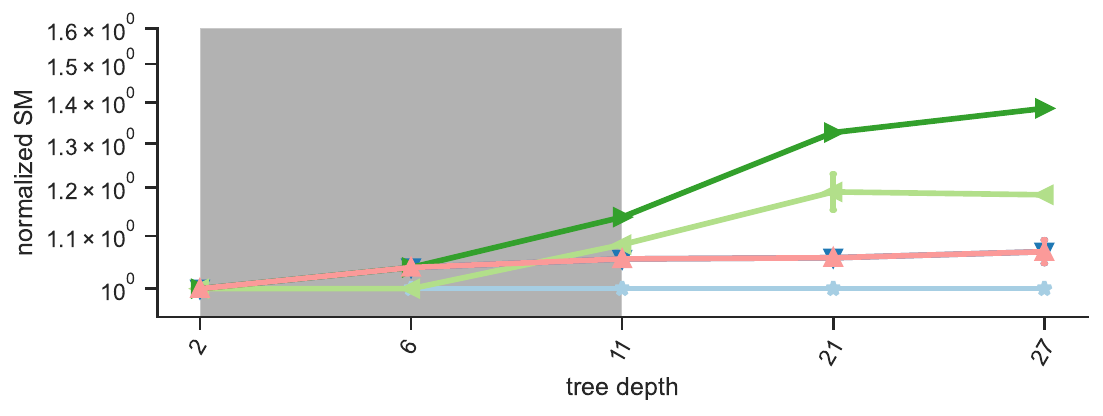}
        \vspace{-.8cm}
        \begin{center}
            sensorless-drive
        \end{center}
        \vspace{-.1cm}
    \end{minipage}
    \begin{minipage}{.48\textwidth}
        \includegraphics[width=\textwidth]{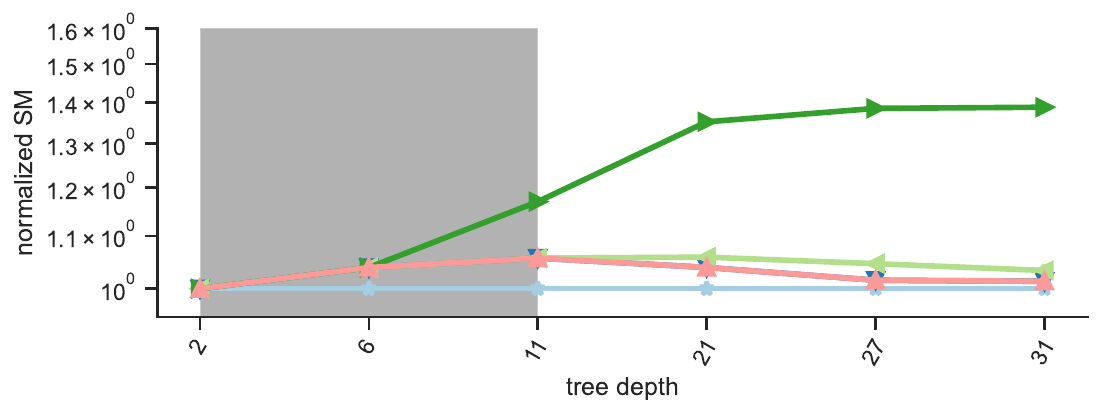}
        \vspace{-.8cm}
        \begin{center}
            spambase
        \end{center}
        \vspace{-.1cm}
    \end{minipage}
    \begin{minipage}{.48\textwidth}
        \includegraphics[width=\textwidth]{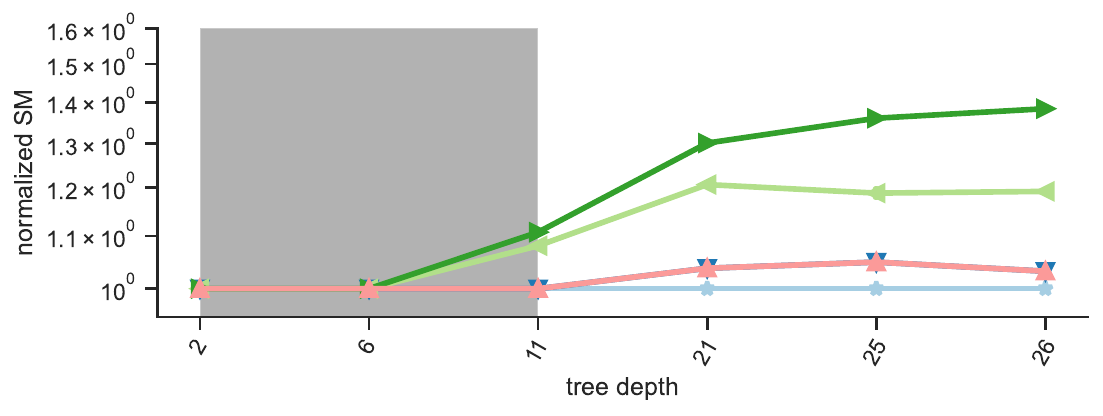}
        \vspace{-.8cm}
        \begin{center}
            wine-quality
        \end{center}
        \vspace{-.1cm}
    \end{minipage}

    \begin{minipage}{.6\textwidth}
        \includegraphics[width=\textwidth]{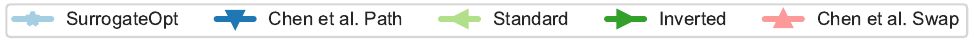}
    \end{minipage}
    \caption{Normalized estimated WCET based on the surrogate model (denoted as Normalized SM) for all datasets.}
    \label{fig:surr}
\end{figure*}

Before studying the analyzed WCET by LLVMTA, we apply the presented surrogate model to the various implementations mentioned before. The results are depicted in \Cref{fig:surr}. Each dataset is illustrated by one plot. The x axis depicts the increasing actual tree depth, while the y axis shows the normalized estimated WCET, w.r.t. the surrogate model, each tree achieves in the corresponding implementation. The gray area indicates the trees, which are considered to not have an adequate shape for the data set.
We denote the normalized results derived by the surrogate model as normalized SM (SurrogateModel) in \Cref{fig:surr}.
For the implementation of Chen et~al. Path,
when a node is not in a kernel, we consider $\delta+\gamma$ to be the estimated cost for the taken and the untaken slot. We normalize all values to the corresponding value of the SurrogateOpt implementation, since its value is proven to be minimal.
Furthermore, the data series for the inverted optimization indicates an upper bound.

The Standard implementation leads to $\approx 20\%$ worse values
for the surrogate model in comparison to the optimized implementation for many cases with deep trees. For smaller trees, all implementations tend to converge in their cost of the surrogate model, which can be caused by the fact that very small trees tend to have only paths of the same length, hence, there is no optimization potential.
The ACET optimization strategy from the literature (Chen et~al.) can improve the estimated WCET w.r.t. the surrogate model in comparison to the Standard implementation in almost all cases, however, still underperforms compared to the proposed optimization algorithm. The Path version, which introduces additional branches out of the kernel, naturally results in a worse estimated WCET for the covertype dataset.

\begin{figure*}
    \centering
    \begin{minipage}{.48\textwidth}
        \includegraphics[width=\textwidth]{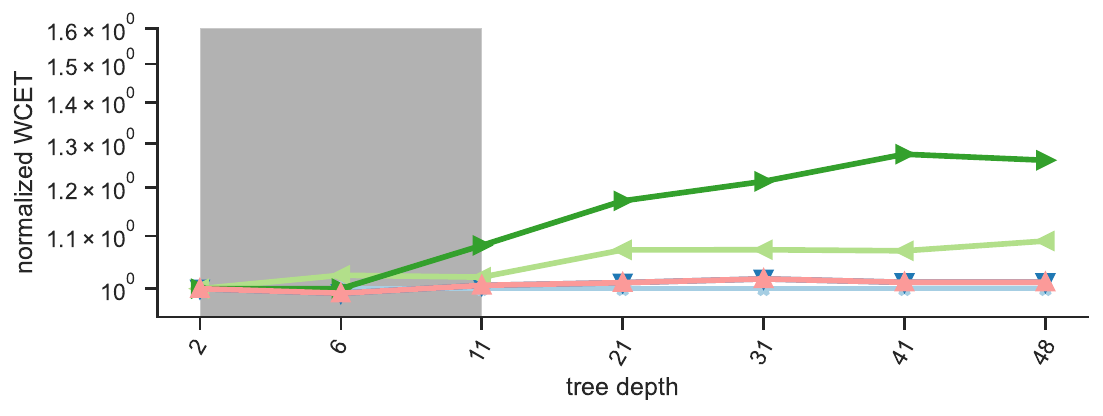}
        \vspace{-.8cm}
        \begin{center}
            adult
        \end{center}
        \vspace{-.1cm}
    \end{minipage}
    \begin{minipage}{.48\textwidth}
        \includegraphics[width=\textwidth]{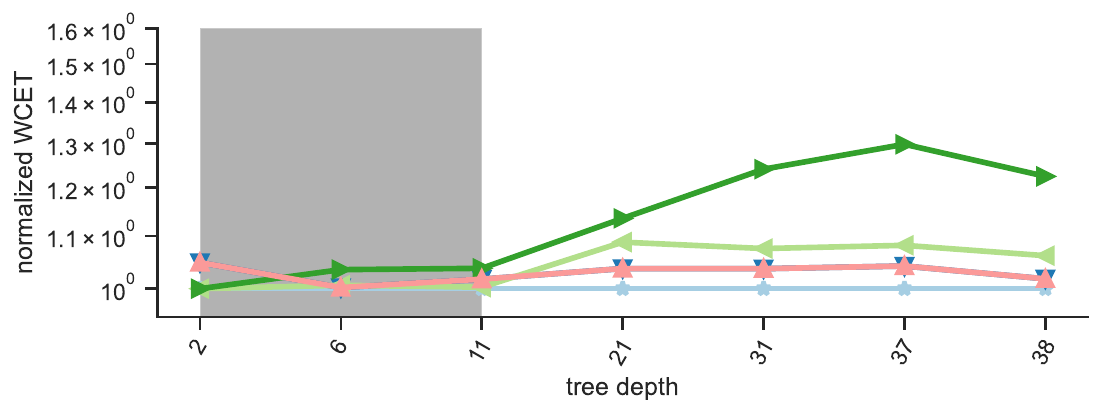}
        \vspace{-.8cm}
        \begin{center}
            bank
        \end{center}
        \vspace{-.1cm}
    \end{minipage}
    \begin{minipage}{.48\textwidth}
        \includegraphics[width=\textwidth]{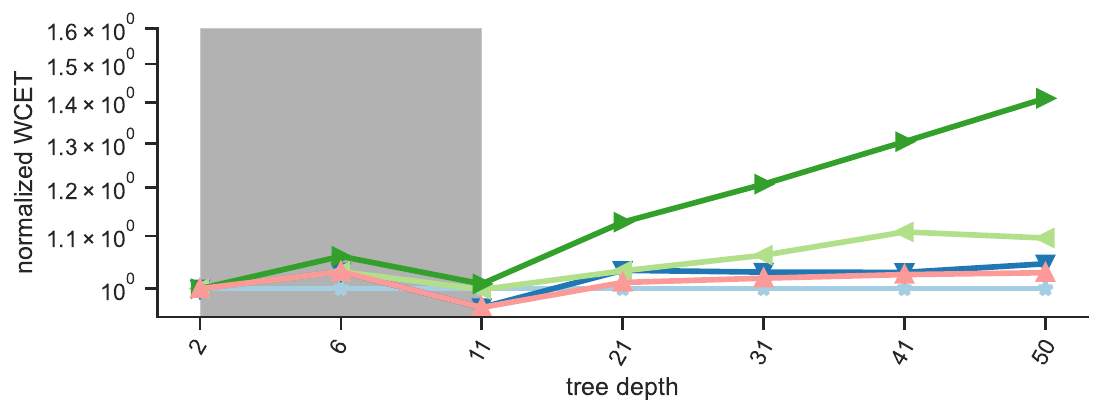}
        \vspace{-.8cm}
        \begin{center}
            covertype
        \end{center}
        \vspace{-.1cm}
    \end{minipage}
    \begin{minipage}{.48\textwidth}
        \includegraphics[width=\textwidth]{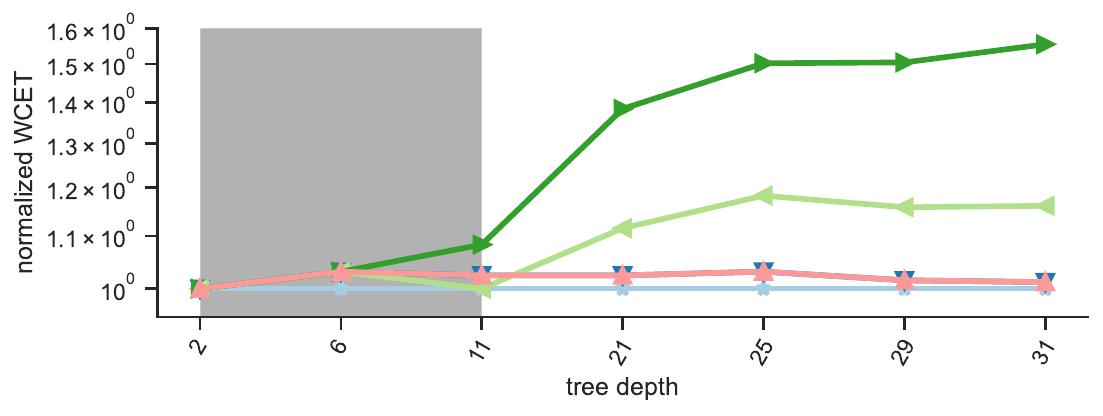}
        \vspace{-.8cm}
        \begin{center}
            letter
        \end{center}
        \vspace{-.1cm}
    \end{minipage}
    \begin{minipage}{.48\textwidth}
        \includegraphics[width=\textwidth]{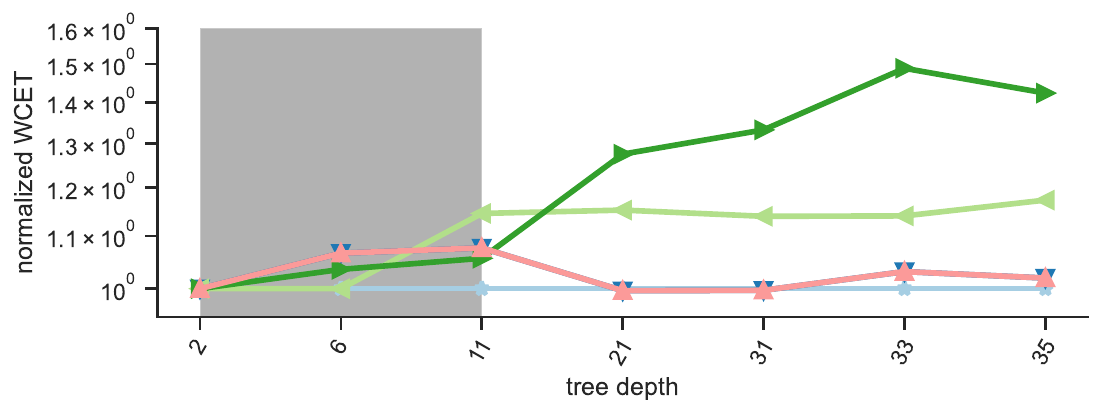}
        \vspace{-.8cm}
        \begin{center}
            magic
        \end{center}
        \vspace{-.1cm}
    \end{minipage}
    \begin{minipage}{.48\textwidth}
        \includegraphics[width=\textwidth]{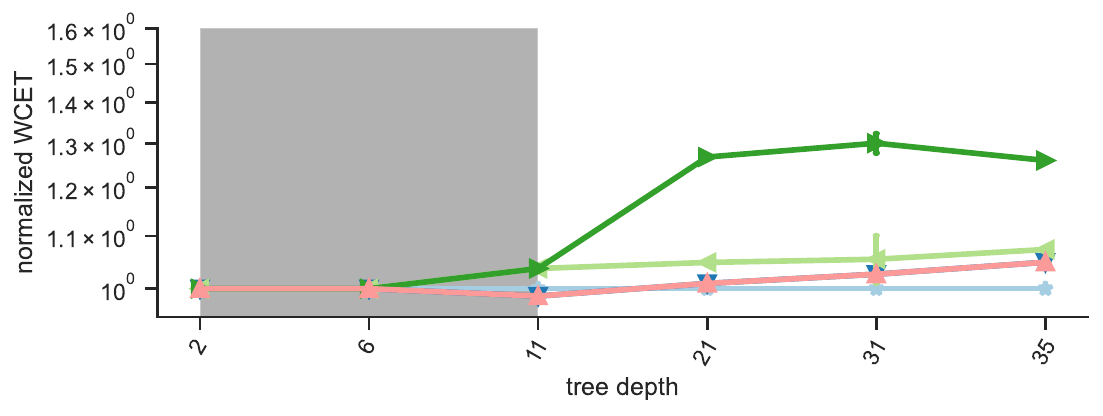}
        \vspace{-.8cm}
        \begin{center}
            mnist
        \end{center}
        \vspace{-.1cm}
    \end{minipage}
    \begin{minipage}{.48\textwidth}
        \includegraphics[width=\textwidth]{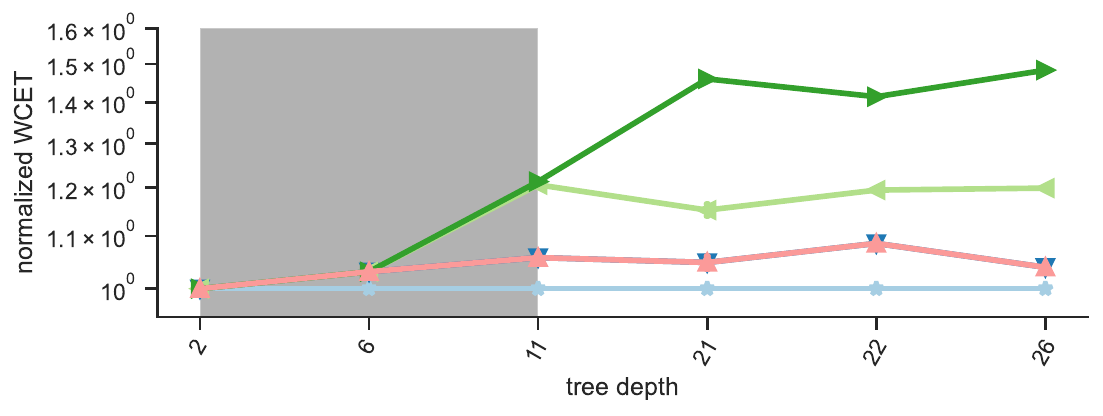}
        \vspace{-.8cm}
        \begin{center}
            satlog
        \end{center}
        \vspace{-.1cm}
    \end{minipage}
    \begin{minipage}{.48\textwidth}
        \includegraphics[width=\textwidth]{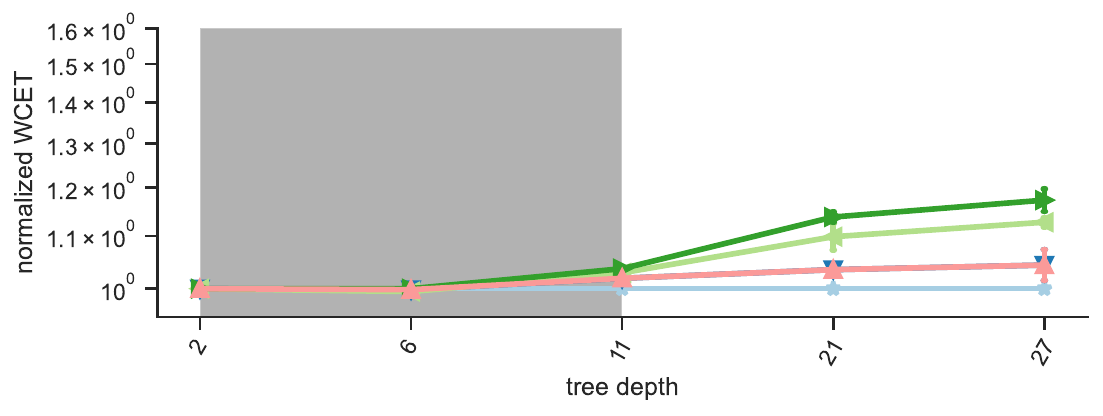}
        \vspace{-.8cm}
        \begin{center}
            sensorless-drive
        \end{center}
        \vspace{-.1cm}
    \end{minipage}
    \begin{minipage}{.48\textwidth}
        \includegraphics[width=\textwidth]{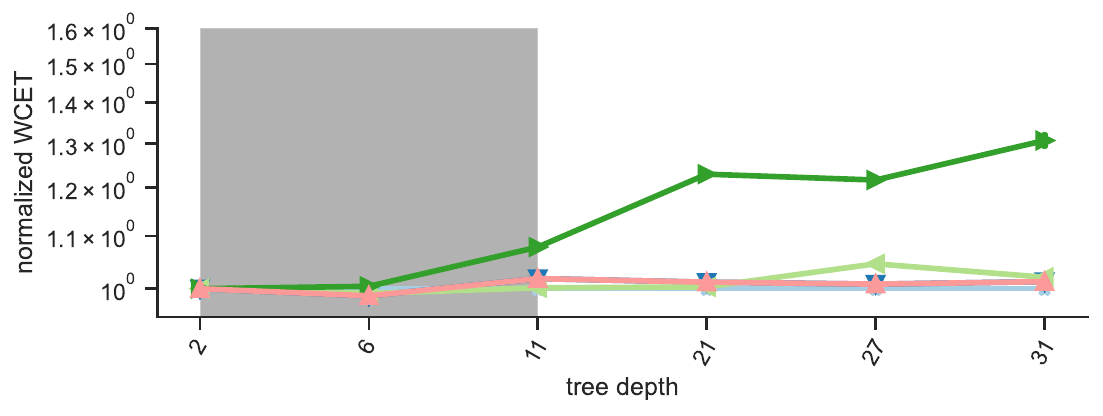}
        \vspace{-.8cm}
        \begin{center}
            spambase
        \end{center}
        \vspace{-.1cm}
    \end{minipage}
    \begin{minipage}{.48\textwidth}
        \includegraphics[width=\textwidth]{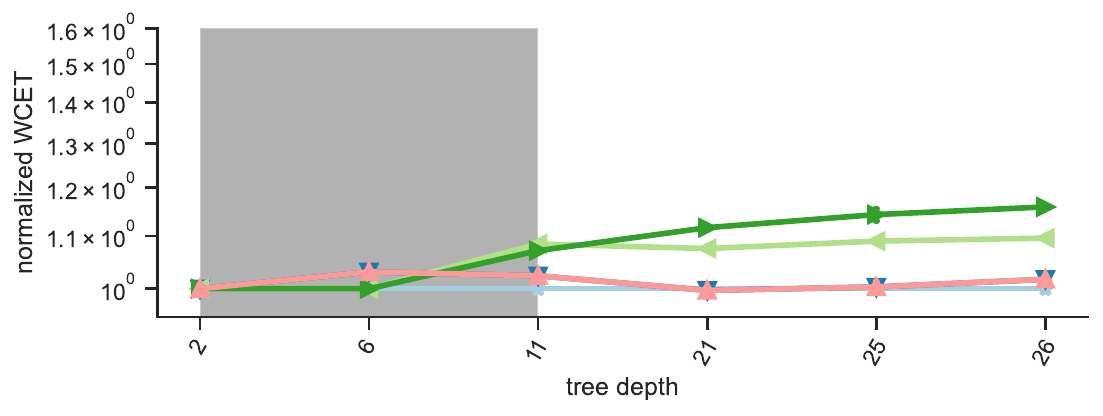}
        \vspace{-.8cm}
        \begin{center}
            wine-quality
        \end{center}
        \vspace{-.1cm}
    \end{minipage}
    \begin{minipage}{.6\textwidth}
        \includegraphics[width=\textwidth]{plots/wcet/Legend.pdf}
    \end{minipage}
    \caption{Normalized analyzed WCET results (based on LLVMTA) for all data sets.
    }
    \label{fig:wcet}
\end{figure*}
\begin{table*}
    \centering
    \begin{tabular}{ll|c|c|c|c|c|c|c|c|c|c}
        &&\rotatebox{90}{adult}&\rotatebox{90}{bank}&\rotatebox{90}{covertype}&\rotatebox{90}{letter}&\rotatebox{90}{magic}&\rotatebox{90}{mnist}&\rotatebox{90}{satlog}&\rotatebox{90}{sensorless-drive}&\rotatebox{90}{spambase}&\rotatebox{90}{wine-quality}\\
        \hline
        \textbf{Standard} $/$&
        \textbf{Surrogate Model Geomean ($d\geq20$)}
        &
        1.11
        &
        1.13
        &
        1.13
        &
        1.20
        &
        1.17
        &
        1.10
        &
        1.20
        &
        1.19
        &
        1.04
        &
        1.19
        \\
        {SurrogateOpt}&
        \textcolor{gray}{\textbf{Surrogate Model Geomean}}
        &
        \textcolor{gray}{1.08}
        &
        \textcolor{gray}{1.08}
        &
        \textcolor{gray}{1.08}
        &
        \textcolor{gray}{1.13}
        &
        \textcolor{gray}{1.11}
        &
        \textcolor{gray}{1.06}
        &
        \textcolor{gray}{1.14}
        &
        \textcolor{gray}{1.12}
        &
        \textcolor{gray}{1.04}
        &
        \textcolor{gray}{1.12}
        \\
        &\textbf{Analyzed WCET Geomean ($d\geq20$)}
        &
        1.08
        &
        1.08
        &
        1.07
        &
        1.15
        &
        1.15
        &
        1.06
        &
        1.17
        &
        1.11
        &
        1.02
        &
        1.09
        \\
        &\textcolor{gray}{\textbf{Analyzed WCET Geomean}}
        &
        \textcolor{gray}{1.05}
        &
        \textcolor{gray}{1.04}
        &
        \textcolor{gray}{1.05}
        &
        \textcolor{gray}{1.09}
        &
        \textcolor{gray}{1.11}
        &
        \textcolor{gray}{1.04}
        &
        \textcolor{gray}{1.13}
        &
        \textcolor{gray}{1.07}
        &
        \textcolor{gray}{1.01}
        &
        \textcolor{gray}{1.06}
        \\
        \hline
        \textbf{Chen et~al. Path} $/$&
        \textbf{Surrogate Model Geomean ($d\geq20$)}
        &
        1.01
        &
        1.03
        &
        1.09
        &
        1.02
        &
        1.03
        &
        1.03
        &
        1.06
        &
        1.06
        &
        1.02
        &
        1.04
        \\
        {SurrogateOpt}&
        \textcolor{gray}{\textbf{Surrogate Model Geomean}}
        &
        \textcolor{gray}{1.02}
        &
        \textcolor{gray}{1.02}
        &
        \textcolor{gray}{1.06}
        &
        \textcolor{gray}{1.03}
        &
        \textcolor{gray}{1.03}
        &
        \textcolor{gray}{1.02}
        &
        \textcolor{gray}{1.05}
        &
        \textcolor{gray}{1.05}
        &
        \textcolor{gray}{1.03 }
        &
        \textcolor{gray}{1.02}
        \\
        &\textbf{Analyzed WCET Geomean ($d\geq20$)}
        &
        1.01
        &
        1.03
        &
        1.03
        &
        1.02
        &
        1.01
        &
        1.03
        &
        1.06
        &
        1.04
        &
        1.01
        &
        1.01
        \\
        &\textcolor{gray}{\textbf{Analyzed WCET Geomean}}
        &
        \textcolor{gray}{1.01}
        &
        \textcolor{gray}{1.03}
        &
        \textcolor{gray}{1.02}
        &
        \textcolor{gray}{1.02}
        &
        \textcolor{gray}{1.03}
        &
        \textcolor{gray}{1.01}
        &
        \textcolor{gray}{1.04}
        &
        \textcolor{gray}{1.02}
        &
        \textcolor{gray}{1.01}
        &
        \textcolor{gray}{1.01}
        \\
        \hline
    \end{tabular}
    \caption{Estimated and Analyzed WCET
      for the Standard and Chen et~al. Path implementation over SurrogateOpt.
        }
    \label{improvenumbers}
\end{table*}

\Cref{fig:wcet} illustrates the resulting analyzed WCET by LLVMTA for the considered implementations.
We again normalize the data to the results from our optimization algorithm; thus,
a data point above $1$ indicates
a worse analyzed WCET and a data point below~$1$ indicates
a better analyzed WCET.
The overall trends for the surrogate model and the analyzed WCET results are similar. This suggests that the surrogate model serves as an adequate estimate for the analyzed WCET for
optimization purposes.
As before, the behaviour varies for smaller trees and tends to show a stabilized behavior for larger trees.

Our optimization no longer is a lower bound, since the analyzed WCET differs from the surrogate model.
In the case of covertype depth~11, adult depth~6, mnist depth~11 and spambase depth~6,
the ACET optimization from Chen et~al.~\cite{chen2022efficient} outperforms our proposed optimization.
Our implementation as well as the ACET optimization perform significantly better than the Standard implementation in most cases for deep trees. Furthermore, in most cases, we achieve a higher reduction of the analyzed WCET, than the algorithm from Chen et al.~\cite{chen2022efficient}. We further note that the inverted optimization, which serves as an upper bound in the surrogate model, also provides higher analyzed WCET for deep trees.

Towards quantization, \Cref{improvenumbers} shows a condensed form of the presented data, where the geometric mean of the quotient between the Standard implementation and the SurrogateOpt implementation is shown in the upper half , and the quotient between the Chen et~al. Path implementation and the SurrogateOpt implementation is shown in the lower half. Further, the data is provided for deep trees (deeper than 20) and for all considered trees.
It has to be noted here again, that the result for all trees also include the results for trees with an inadequate shape for the data set.
A number larger than 1 in this table indicates a higher WCET of the Standard, respectively the Chen et~al. Path version than the SurrogateOpt version.
Compared to the Standard implementation,
our optimization can improve the estimated surrogate model WCET by up to $20\%$ and by up to $15-17\%$ for the analyzed WCET for deep trees. This suggests that WCET optimization for decision trees is in general important,
since a significant reduction in WCET can have a high influence on the utilization of a system. Comparing our optimization algorithm to the ACET-based optimization from Chen et~al.~\cite{chen2022efficient}, the improvements are around $3-6\%$
in the surrogate model and around $1-4\%$ for the analyzed WCET. This suggests
that ACET optimization tends to also improve the WCET of decision trees as well. However, it also shows that systematically designing a specific WCET optimization algorithm, as being done in this paper,
can achieve further optimized decision trees.

\subsection{Discussion}

From the previously shown results, the benefits from WCET optimization for decision trees can be clearly observed. Without changing the model logic of the machine learning model, the WCET guarantees are significantly improved.
Potentially unexpectedly, ACET optimization
also improves the WCET. One possible reason is that decision trees tend to grow deeper when more data samples are present for a certain subtree. More data samples translate to a higher access probability in terms of ACET. Consequently, when ACET optimization targets highly probable substructures of a decision tree, it likely targets structures
relevant to the WCEP and the resulting WCET.
Further, ACET optimization tries
to reduce the number of cache misses. When this reduction works reliably, it is also beneficial for the WCET.

While prior ACET-focussed work inadvertently also improves the WCET, our work is the first to systematically study this problem.
The evaluation shows that the surrogate model underlying our optimization succinctly captures key aspects influencing the WCET.
Thus, optimizations w.r.t. the surrogate model are also effective in optimizing the analyzed WCET, beyond prior ACET-focussed work.
On the other hand, it is apparent that there is still a small gap between the surrogate model and the analyzed WCET, indicating that modest improvements are still possible if this gap can be closed by an improved surrogate model, a challenge we would like to tackle in future work.


\section{Related Work}
Although not specifically targeting decision trees, WCET optimization has been frequently discussed in the literature. 
Previous work for WCET optimization focuses on optimizing the memory layout \cite{WCET_Cache, WCET_SPM, WCET_Code}, register allocation \cite{WCET_REG}, resource allocation \cite{WCET_Resource}, loop unrolling \cite{WCET_LOOP}, and applying compiler optimizations along the worst-case execution path \cite{WCEP_OPT}, short WCEP\@.
Many of these optimizations are implemented in WCET-aware compilers such as WCC \cite{WCC} and Patmos \cite{PATMOS}.
The Patmos project not only provides a WCET compiler, but also contributed a co-design of an WCET-analyzable core and its toolchain.
Plazar et~al. take the impact of branches to WCET specifically into account by developing a code positioning method, which optimizes for the effects of branch predictors \cite{plazar2011wcet}.

Implementations of decisions trees result in an unusual large number of basic blocks compared to other software, which quickly makes these compiler-based approaches infeasible.
While compilers take the program's source code as input 
to generate assembly, we use a high-level model of a decision tree
as input to generate C code.

To the best of our knowledge, other methods, which take a high-level model of the program into account for WCET optimization, do not exist.
However, many approaches to speed up the ACET of a program by taking a higher-level model into account do exist.
Some of these approaches rely on the MLIR \cite{MLIR} to keep a higher-level model of the program and then use that to improve execution time \cite{google, 9923840}.
Alternatively, the ACETONE \cite{ACETONE} approach
generates WCET-analyzable C code of deep neural networks (DNNs) for single-core CPUs.

Apart from WCET-aware optimization, many hardware-aware approaches specific to decision trees and/or random forests for optimizing the ACET exist.
Asadi et~al.\ \cite{Asadi/etal/2014} introduce the concept of hardware-near code generation for decision trees, where the basic structure of data-based \emph{native-trees} and code-based \emph{if-else-trees} is introduced.
Buschjäger et~al.\ \cite{8594826} enrich the optimization of random forests with a probabilistic model of the average-case execution, where the behavior of instruction and data caches is optimized with respect to this model.
Cho and Li propose \textit{Treelite}~\cite{Cho2018}, where the concept of flipping conditions considered in this work is realized through the compiler intrinsic \texttt{\_\_builtin\_expect()} to provide the branch prediction information.
Chen et~al.\ \cite{chen2022efficient} improve these methods by including precise estimation of the node sizes on various execution targets. Hakert et~al.\ \cite{hakert2023immediate} further reduce the data-cache misses of if-else-tree implementations when using floating-point values.


\label{sec:related}
\section{Conclusion and Outlook}
In this paper, we present the first systematic study of WCET optimization for decision trees with if-else-tree implementations.
We propose an approach that leverages a linear compositional surrogate model that estimates path WCETs to algorithmically optimize the WCET of a given decision tree and the corresponding implementation as an if-else-tree.
The surrogate model provides a relation between the depth of a path through the tree and the number of taken branches on that path to the resulting WCET\@.
We propose an optimization algorithm, which takes the corresponding model parameters into account and provides an implementation of an if-else-tree that minimizes the WCET with respect to the surrogate model.
To study in which scenarios the surrogate model is accurate, we provide a detailed analysis of the surrogate model itself and the relation to real-world decision trees.
We compare our optimization algorithm, using a surrogate model, in an experimental evaluation to state-of-the-art heuristic ACET decision-tree performance optimization.

Both the discussion for the surrogate models correlation and
for the WCET reduction in the evaluation show that our proposed algorithm can reduce the analyzed WCET of most decision trees beyond the state-of-the-art optimization.


We plan to consider potential influences on the WCET, which cannot be optimized in the source code of C implementations, and integrate them into the systematic WCET reduction in the future.
We also plan to investigate the effect of other WCET drivers apart from branches and cache misses.

\bibliographystyle{plain}
\bibliography{bib}

\begin{thebibliography}{10}

\bibitem{Asadi/etal/2014}
N.~Asadi, J.~Lin, and A.~P. de~Vries.
\newblock Runtime optimizations for tree-based machine learning models.
\newblock {\em IEEE Transactions on Knowledge and Data Engineering},
  26(9):2281--2292, Sept 2014.

\bibitem{asuncion2007uci}
Arthur Asuncion and David Newman.
\newblock {UCI} machine learning repository, 2007.

\bibitem{sensing}
Mariana Belgiu and Lucian Dr{\u a}gu{\c t}.
\newblock Random forest in remote sensing: A review of applications and future
  directions.
\newblock {\em ISPRS journal of photogrammetry and remote sensing}, 114:24--31,
  April 2016.

\bibitem{DBLP:conf/pkdd/BockermannBBEMR15}
Christian Bockermann, Kai Br{\"{u}}gge, Jens Bu{\ss}, Alexey Egorov, Katharina
  Morik, Wolfgang Rhode, and Tim Ruhe.
\newblock Online analysis of high-volume data streams in astroparticle physics.
\newblock In Albert Bifet, Michael May, Bianca Zadrozny, Ricard Gavald{\`{a}},
  Dino Pedreschi, Francesco Bonchi, Jaime~S. Cardoso, and Myra Spiliopoulou,
  editors, {\em Machine Learning and Knowledge Discovery in Databases -
  European Conference, {ECML} {PKDD} 2015, Porto, Portugal, September 7-11,
  2015, Proceedings, Part {III}}, volume 9286, pages 100--115, 2015.

\bibitem{breiman1984classification}
L.~Breiman, J.~Friedman, C.J. Stone, and R.A. Olshen.
\newblock {\em Classification and Regression Trees}.
\newblock Taylor \& Francis, 1984.

\bibitem{8594826}
Sebastian Buschj\"ager, Kuan-Hsun Chen, Jian-Jia Chen, and Katharina Morik.
\newblock Realization of random forest for real-time evaluation through tree
  framing.
\newblock In {\em 2018 IEEE International Conference on Data Mining (ICDM)},
  2018.

\bibitem{chen2022efficient}
Kuan-Hsun Chen, ChiaHui Su, Christian Hakert, Sebastian Buschj{\"a}ger,
  Chao-Lin Lee, Jenq-Kuen Lee, Katharina Morik, and Jian-Jia Chen.
\newblock Efficient realization of decision trees for real-time inference.
\newblock {\em Transactions on Embedded Computing Systems}, 2022.

\bibitem{Cho2018}
Hyunsu Cho and Mu~Li.
\newblock Treelite: Toolbox for decision tree deployment.
\newblock In {\em Proceedings of the SysML Conference}, 2018.

\bibitem{WCETO3}
Micka{\"e}l Dardaillon, Stefanos Skalistis, Isabelle Puaut, and Steven Derrien.
\newblock Reconciling compiler optimizations and {WCET} estimation using
  iterative compilation.
\newblock In {\em {RTSS 2019 - 40th IEEE Real-Time Systems Symposium}}, pages
  1--13, Hong Kong, China, December 2019. {IEEE}.

\bibitem{ACETONE}
Iryna De~Albuquerque~Silva, Thomas Carle, Adrien Gauffriau, and Claire Pagetti.
\newblock {ACETONE: Predictable Programming Framework for ML Applications in
  Safety-Critical Systems}.
\newblock In Martina Maggio, editor, {\em 34th Euromicro Conference on
  Real-Time Systems (ECRTS 2022)}, volume 231 of {\em Leibniz International
  Proceedings in Informatics (LIPIcs)}, pages 3:1--3:19, Dagstuhl, Germany,
  2022. Schloss Dagstuhl -- Leibniz-Zentrum f{\"u}r Informatik.

\bibitem{WCET_SPM}
Heiko Falk and Jan~C. Kleinsorge.
\newblock Optimal static {WCET}-aware scratchpad allocation of program code.
\newblock In {\em Proceedings of the 46th Annual Design Automation Conference},
  DAC '09, page 732–737, New York, NY, USA, 2009. Association for Computing
  Machinery.

\bibitem{WCC}
Heiko Falk, Paul Lokuciejewski, and Henrik Theiling.
\newblock Design of a {WCET}-aware {C} compiler.
\newblock In {\em 2006 IEEE/ACM/IFIP Workshop on Embedded Systems for Real Time
  Multimedia}, pages 121--126, 2006.

\bibitem{WCET_REG}
Heiko Falk, Norman Schmitz, and Florian Schmoll.
\newblock {WCET}-aware register allocation based on integer-linear programming.
\newblock In {\em 2011 23rd Euromicro Conference on Real-Time Systems}, pages
  13--22, 2011.

\bibitem{DBLP:journals/ijcv/FanelliDGFG13}
Gabriele Fanelli, Matthias Dantone, Juergen Gall, Andrea Fossati, and Luc~Van
  Gool.
\newblock Random forests for real time 3d face analysis.
\newblock {\em Int. J. Comput. Vis.}, 101(3):437--458, 2013.

\bibitem{llvmta}
Sebastian Hahn, Michael Jacobs, Nils H\"{o}lscher, Kuan-Hsun Chen, Jian-Jia
  Chen, and Jan Reineke.
\newblock {LLVMTA: An LLVM-Based WCET Analysis Tool}.
\newblock In Cl\'{e}ment Ballabriga, editor, {\em 20th International Workshop
  on Worst-Case Execution Time Analysis (WCET 2022)}, volume 103 of {\em Open
  Access Series in Informatics (OASIcs)}, pages 2:1--2:17, Dagstuhl, Germany,
  2022. Schloss Dagstuhl -- Leibniz-Zentrum f{\"u}r Informatik.

\bibitem{SIC}
Sebastian Hahn and Jan Reineke.
\newblock Design and analysis of {SIC}: A provably timing-predictable pipelined
  processor core.
\newblock {\em Real-Time Systems}, 56(2):207--245, 2020.

\bibitem{Hahn2015}
Sebastian Hahn, Jan Reineke, and Reinhard Wilhelm.
\newblock {\em Toward Compact Abstractions for Processor Pipelines}, pages
  205--220.
\newblock Springer International Publishing, Cham, 2015.

\bibitem{hakert2023immediate}
Christian Hakert, Kuan-Hsun Chen, and Jian-Jia Chen.
\newblock Immediate split trees: Immediate encoding of floating point split
  values in random forests.
\newblock In {\em Machine Learning and Knowledge Discovery in Databases:
  European Conference, ECML PKDD 2022, Grenoble, France, September 19--23,
  2022, Proceedings, Part V}, pages 531--546. Springer, 2023.

\bibitem{PATMOS}
Stefan Hepp, Benedikt Huber, Jens Knoop, Daniel Prokesch, and Peter~P Puschner.
\newblock The platin tool kit-the {T-CREST} approach for compiler and {WCET}
  integration.
\newblock In {\em Proceedings 18th Kolloquium Programmiersprachen und
  Grundlagen der Programmierung, KPS}, pages 5--7, 2015.

\bibitem{665905b2-6123-3642-832e-05dbc1f48979}
M.~G. Kendall.
\newblock A new measure of rank correlation.
\newblock {\em Biometrika}, 30(1/2):81--93, 1938.

\bibitem{google}
Rasmus~Munk Larsen and Tatiana Shpeisman.
\newblock Tensorflow graph optimizations, 2019.

\bibitem{MLIR}
Chris Lattner, Mehdi Amini, Uday Bondhugula, Albert Cohen, Andy Davis, Jacques
  Pienaar, River Riddle, Tatiana Shpeisman, Nicolas Vasilache, and Oleksandr
  Zinenko.
\newblock {MLIR}: Scaling compiler infrastructure for domain specific
  computation.
\newblock In {\em 2021 IEEE/ACM International Symposium on Code Generation and
  Optimization (CGO)}, pages 2--14. IEEE, 2021.

\bibitem{WCET_Cache}
Paul Lokuciejewski, Heiko Falk, and Peter Marwedel.
\newblock {WCET}-driven cache-based procedure positioning optimizations.
\newblock In {\em 2008 Euromicro Conference on Real-Time Systems}, pages
  321--330, 2008.

\bibitem{WCET_LOOP}
Paul Lokuciejewski and Peter Marwedel.
\newblock Combining worst-case timing models, loop unrolling, and static loop
  analysis for {WCET} minimization.
\newblock In {\em 2009 21st Euromicro Conference on Real-Time Systems}, pages
  35--44, 2009.

\bibitem{DBLP:journals/natmi/LundbergECDPNKH20}
Scott~M. Lundberg, Gabriel~G. Erion, Hugh Chen, Alex~J. DeGrave, Jordan~M.
  Prutkin, Bala Nair, Ronit Katz, Jonathan Himmelfarb, Nisha Bansal, and
  Su{-}In Lee.
\newblock From local explanations to global understanding with explainable {AI}
  for trees.
\newblock {\em Nat. Mach. Intell.}, 2(1):56--67, 2020.

\bibitem{6751433}
Javier Marín, David Vázquez, Antonio~M. López, Jaume Amores, and Bastian
  Leibe.
\newblock Random forests of local experts for pedestrian detection.
\newblock In {\em 2013 IEEE International Conference on Computer Vision}, pages
  2592--2599, 2013.

\bibitem{plazar2011wcet}
Sascha Plazar, Jan Kleinsorge, Heiko Falk, and Peter Marwedel.
\newblock Wcet-driven branch prediction aware code positioning.
\newblock In {\em Proceedings of the 14th international conference on
  Compilers, architectures and synthesis for embedded systems}, pages 165--174,
  2011.

\bibitem{9923840}
Ashwin Prasad, Sampath Rajendra, Kaushik Rajan, R~Govindarajan, and Uday
  Bondhugula.
\newblock Treebeard: An optimizing compiler for decision tree based {ML}
  inference.
\newblock In {\em 2022 55th IEEE/ACM International Symposium on
  Microarchitecture (MICRO)}, pages 494--511, 2022.

\bibitem{Reineke07}
Jan Reineke, Daniel Grund, Christoph Berg, and Reinhard Wilhelm.
\newblock Timing predictability of cache replacement policies.
\newblock {\em Real Time Syst.}, 37(2):99--122, 2007.

\bibitem{9925686}
Enrico Tabanelli, Giuseppe Tagliavini, and Luca Benini.
\newblock Optimizing random forest-based inference on {RISC-V} {MCUs} at the
  extreme edge.
\newblock {\em IEEE Transactions on Computer-Aided Design of Integrated
  Circuits and Systems}, 41(11):4516--4526, 2022.

\bibitem{WCET_Resource}
Man-Ki Yoon, Jung-Eun Kim, and Lui Sha.
\newblock Optimizing tunable {WCET} with shared resource allocation and
  arbitration in hard real-time multicore systems.
\newblock In {\em 2011 IEEE 32nd Real-Time Systems Symposium}, pages 227--238,
  2011.

\bibitem{WCEP_OPT}
W.~Zhao, W.~Kreahling, D.~Whalley, C.~Healy, and F.~Mueller.
\newblock Improving {WCET} by optimizing worst-case paths.
\newblock In {\em 11th IEEE Real Time and Embedded Technology and Applications
  Symposium}, pages 138--147, 2005.

\bibitem{WCET_Code}
Wankang Zhao, David Whalley, Christopher Healy, and Frank Mueller.
\newblock Improving {WCET} by applying a {WC} code-positioning optimization.
\newblock {\em ACM Trans. Archit. Code Optim.}, 2(4):335–365, December 2005.

\bibitem{10.5555/3172077.3172386}
Zhi-Hua Zhou and Ji~Feng.
\newblock Deep forest: towards an alternative to deep neural networks.
\newblock In {\em International Joint Conference on Artificial Intelligence},
  page 3553–3559, 2017.

\end{thebibliography}

\end{document}